\title{Group Distributionally Robust Machine Learning under Group Level Distributional Uncertainty}
\author{%
\normalfont
Xenia Konti$^{1}$, Yi Shen$^{1}$, Zifan Wang$^{2}$, Karl Henrik Johansson$^{2}$,\\
Michael J. Pencina$^{1}$, Nicoleta J. Economou-Zavlanos$^{1}$, Michael M. Zavlanos$^{1}$ \\
\\
$^{1}$ Duke University \quad
$^{2}$ KTH Royal Institute of Technology \\
}
\newtheorem{assumption}{Assumption}[section]
    \newtheorem{example}{Example}[section]
\newtheorem{property}{Property}[section]
\newtheorem{lemma}{Lemma}[section]
\newtheorem{theorem}{Theorem}[section]
\newtheorem{proposition}{Proposition}[section]
\newtheorem{definition}{Definition}
\definecolor{lightblue}{rgb}{0.8, 0.9, 1.0}
\definecolor{midblue}{rgb}{0.6, 0.8, 1.0}
\definecolor{strongblue}{rgb}{0.3, 0.6, 1.0}
\definecolor{lightred}{rgb}{1.0, 0.8, 0.8}
\definecolor{midred}{rgb}{1.0, 0.6, 0.6}
\begin{document}
\maketitle
\begin{abstract}
   The performance of machine learning (ML) models critically depends on the quality and representativeness of the training data. In applications with multiple heterogeneous data generating sources, standard ML methods often learn spurious correlations that perform well on average but degrade performance for atypical or underrepresented groups. Prior work addresses this issue by optimizing the worst-group performance. However, these approaches typically assume that the underlying data distributions for each group can be accurately estimated using the training data, a condition that is frequently violated in noisy, non-stationary, and evolving environments. In this work, we propose a novel framework that relies on Wasserstein-based distributionally robust optimization (DRO) to account for the distributional uncertainty within each group, while simultaneously preserving the objective of improving the worst-group performance. We develop a gradient descent-ascent algorithm to solve the proposed DRO problem and provide convergence results. Finally, we validate the effectiveness of our method on real-world data.
   
\end{abstract}
\section{Introduction}
 Machine learning models are typically trained to minimize the average loss over training datasets, under the assumption that both training and testing samples are drawn independently from the same distribution. However, in real-world applications, this assumption is often violated. For instance, data can be generated from multiple heterogeneous environments, such as different hospitals, geographic regions, or demographic groups, and each of these environments can be associated with a distinct data distribution. In addition, even within a single environment, the data distribution may shift over time due to factors like temporal drift, changes in population demographics, or finite sampling bias. In these real-world applications, models trained without accounting for data heterogeneity or distribution shifts may show disparate performance across different subpopulations in the dataset -- even if they achieve low average loss over the whole population \cite{duchi2019distributionally} -- or may even show average performance degradation when transferred from a training to a test set within the same environment.
These shortcomings can be especially problematic in high-stakes domains like healthcare \cite{seyyed2020chexclusion} and finance \cite{fuster2022predictably, khandani2010consumer}, where models should perform equally well across different population subgroups and maintain their performance in the presence of distribution shifts that can occur when they are deployed on environments that are different from those they were trained on.

A principled framework that has been widely used to introduce robustness to possible distribution shifts between training and test environments is Distributionally Robust Optimization (DRO). DRO employs a set of multiple plausible distributions that may describe future test environments, known as the ambiguity set, and formulates the robust learning problem as a min-sup problem that returns a model that minimizes the worst-case loss over this ambiguity set \cite{goh2010distributionally}, as shown in Figure \ref{fig:dro}. In classical DRO, the ambiguity set is typically constructed as a ball around the data generating distribution of the training set, with radius defined by different divergence measures such as $f$-divergence or Wasserstein distance \cite{namkoong2016stochastic, kuhn2019wasserstein, chen2018distributionally}. While this formulation captures uncertainty around a single environment, it does not capture the effect of multi-source data that are common in practice.

To handle data generated from multiple environments with different data generating distributions, Group DRO (GDRO) methods have been proposed \cite{sagawa2019distributionally, oren2019distributionally}. GDRO methods typically define an ambiguity set that contains all linear mixtures of those group distributions -- Figure \ref{fig:group} -- and formulate a min-max optimization problem that returns a model that performs well on the combination of environments with the worst expected loss. While this problem has been solved under perfect knowledge of the environments and their corresponding distributions using stochastic gradient update methods \cite{sagawa2019distributionally, zhang2023stochastic, soma2022optimal, yu2024efficient}, there is limited work that considers uncertainty in the training data generating distributions. In this direction, the work in \cite{ghosal2023distributionally} assumes that the training data labels in each group are uncertain and proposes a probabilistic group membership approach to address this challenge. However, addressing distribution shifts between the training and test sets in the local environments in a Group DRO setting is a problem that, to the best of our knowledge, still remains unexplored. 

\begin{figure}[t]
    \centering

    \begin{subfigure}[t]{0.3\textwidth}
        \includegraphics[width=\linewidth]{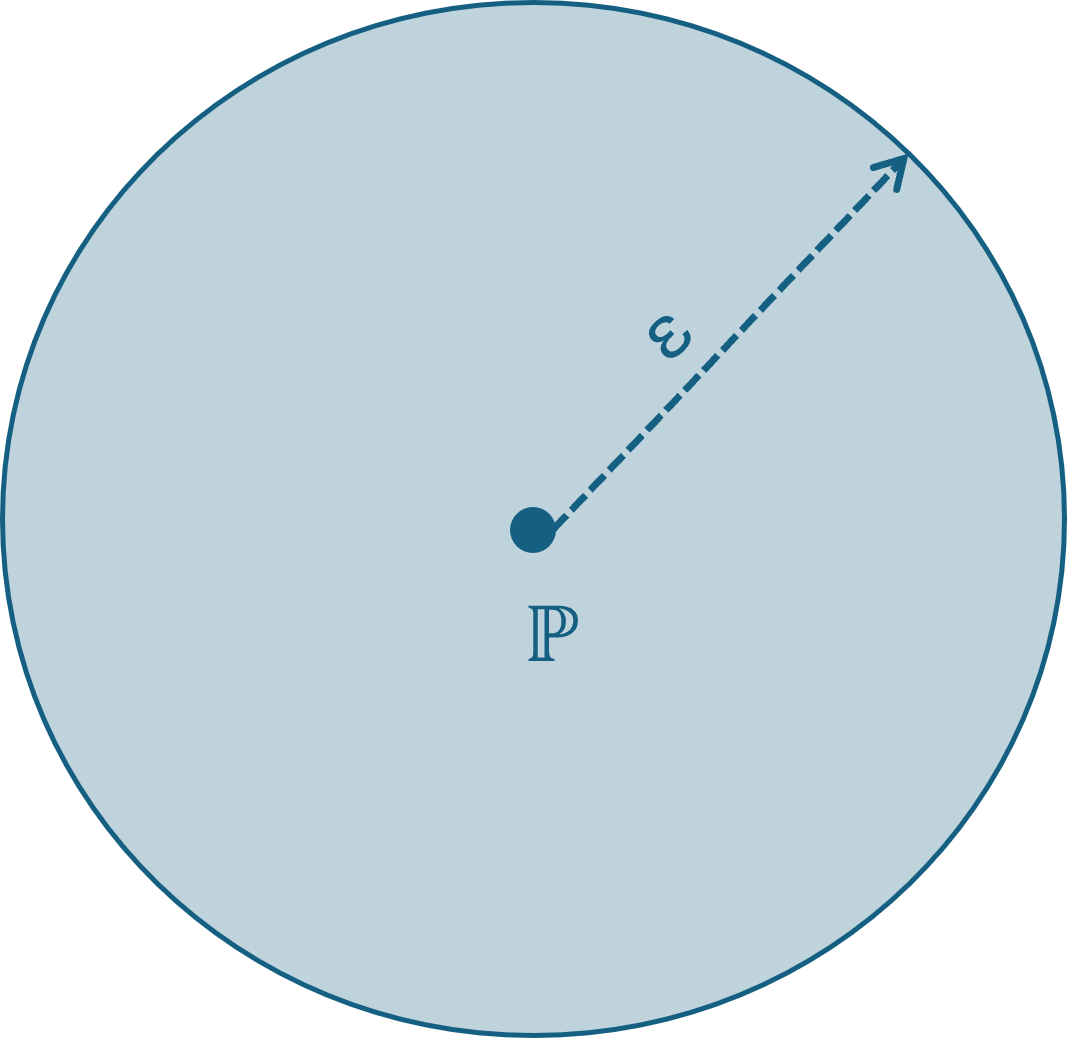}
        \caption{}
        \label{fig:dro}
    \end{subfigure}
    \hfill
    \begin{subfigure}[t]{0.3\textwidth}
        \includegraphics[width=\linewidth]{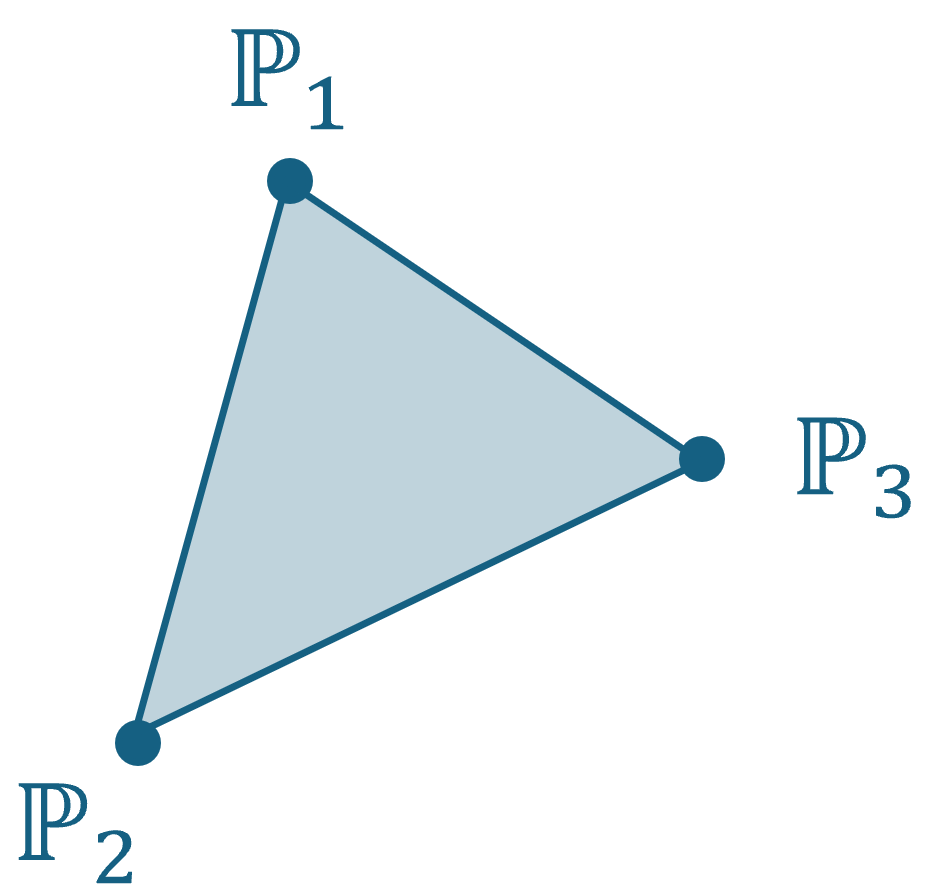}
        \caption{}
        \label{fig:group}
    \end{subfigure}
    \hfill
    \begin{subfigure}[t]{0.3\textwidth}
        \includegraphics[width=\linewidth]{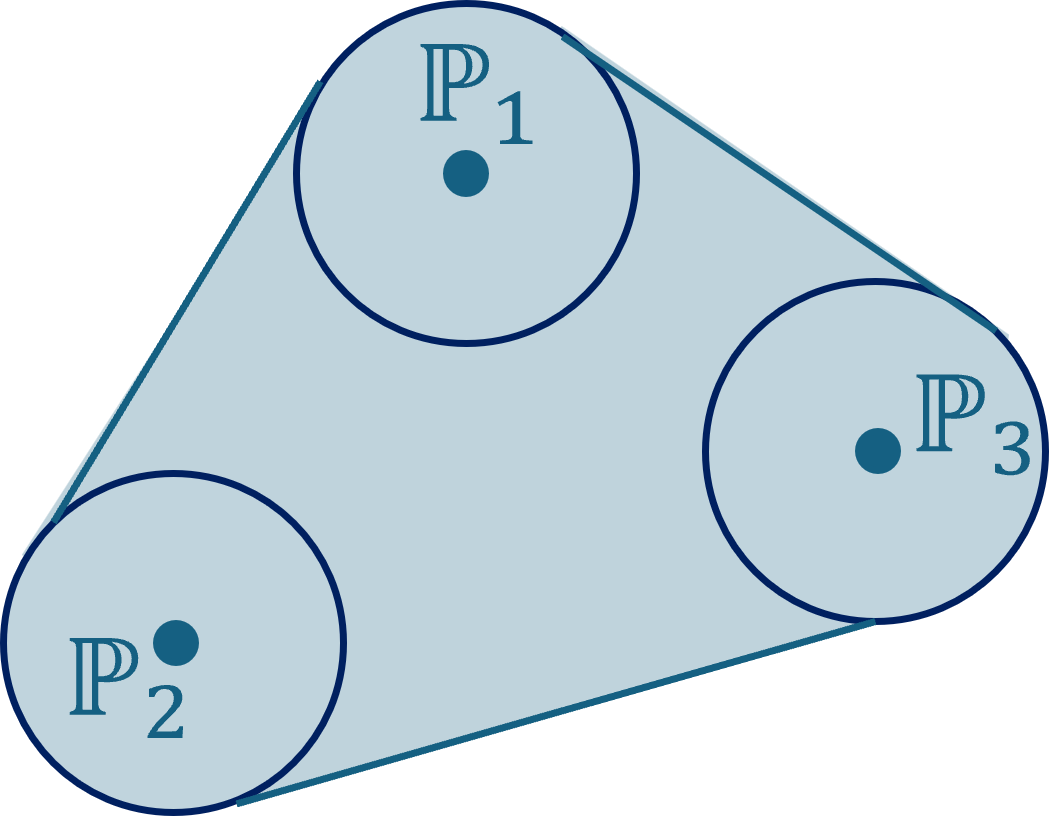}
        \caption{}
        \label{fig:group_plus_dro}
    \end{subfigure}

    \caption{DRO ambiguity sets: (a) Ambiguity set in classical DRO that contains all distributions that are $\varepsilon$ away from the data generating distribution $\mathbb{P}$ that generated the training set; (b) Ambiguity set in classical Group DRO that contains all distributions that lie in the simplex formed by the data generating distributions $\{\mathbb{P}_1, \mathbb{P}_2, \mathbb{P}_3,\cdots\}$ of a collection of environments; and (c) Ambiguity set of our method that contains all distributions that are $\varepsilon_i$ away from the data generating distribution $\mathbb{P}_i$ of each environment $i=1,2,3,\cdots$, as well as all mixtures of those.}
    \label{fig:comparison}
\end{figure}

In this work, we propose a framework that addresses both heterogeneity across the data-generating environments and distributional shifts within each of these environments. We formulate the robust learning problem as a nested min-max-sup optimization: the outer minimization seeks model parameters that minimize the expected loss, the first-layer maximization optimizes over group weights to identify the worst-performing subpopulation, and the innermost maximization identifies the worst-case distribution within each group’s local ambiguity set. This three-level structure presents substantial algorithmic and analytical challenges. In particular, the inner supremum over distributions requires approximating adversarial perturbations, which introduces computational complexity not present in classical Group DRO. Moreover, this nested structure complicates the theoretical analysis: convergence guarantees require careful coordination of three interacting update steps (distributional perturbation, group reweighting, and parameter updates), each with their own sensitivity to smoothness and step size conditions. To address these challenges, we design a tractable three-step gradient algorithm and provide convergence results under standard assumptions. Our approach thus fills a critical gap in the robust optimization literature by simultaneously addressing across-group and within-group uncertainties in a unified and provably convergent framework.We validate our approach on real-world datasets and demonstrate its advantages over classical robust methods.

The remainder of the paper is organized as follows. Section~\ref{sec:setup} introduces the problem setup and presents our formulation for group-level distributional uncertainty. Section~\ref{sec:methodology} describes the proposed methodology and algorithm design. Section~\ref{sec:experiments} presents our experimental setup and empirical results on two real-world datasets. Finally, Section~\ref{sec:conclusion} concludes the paper.


\section{Problem Setup}
\label{sec:setup}
We consider a training set $D_{train}$ consisting of data points sampled from $G$ environments with independent generating distributions $\{\mathbb{P}_{X,Y}^1, \mathbb{P}_{X,Y}^2, \cdots, \mathbb{P}_{X,Y}^G\}$, where $X \in \mathcal{X}$ and $Y \in \mathcal{Y}$ are random variables denoting covariates and outcomes, respectively. We assume that each data point in $D_{train}$ is of the form $\{x_i, y_i, g_i\}$ for $i=1,\cdots, N$, where $x_i \in \mathcal{X}$ denotes the covariates, $y_i \in \mathcal{Y}$ denotes the outcomes, $g_i \in \{1, \cdots, G\}$ denotes the environment from which the point was sampled, and $N$ is the size of the dataset. We also assume that in the training set, a data-point is generated from an environment $g$ with probability $p_g$, such that $\sum_{g=1}^G p_g = 1$. Then, the data generating distribution $\mathbb{P}_{X,Y}$ associated with the training set $D_{train}$ can be defined as a mixture of the environmental distributions as
$$\mathbb{P}_{X,Y} = \sum_{g=1}^G p_g \cdot  \mathbb{P}_{X,Y}^g.$$

Using the data generated from $\mathbb{P}_{X,Y}$, typical machine learning techniques rely on Empirical Risk Minimization (ERM) to compute a parametric model $f_\theta: \mathcal{X} \rightarrow \mathcal{Y}$, where $\theta$ are the model parameters, that minimizes the expected loss $\mathcal{L}$ over the training set, i.e.,
\begin{equation}
    \label{eqn: erm}
    \min_{\theta} \mathbb{E}_{(x,y) \sim \mathbb{P}_{X, Y}}[\mathcal{L}(f_\theta; x, y)].
\end{equation}

\subsection{Distribution Shifts Across Groups}\label{sec:group_DRO}

When data are generated from multiple possibly heterogeneous environments, training a model on the distribution $\mathbb{P}_{X,Y}$ using \eqref{eqn: erm}, performs well on average but can lead to disparate performance across the individual environments \cite{hong2023predictive, sagawa2019distributionally}. In these situations, Group DRO can be used to improve the model performance across the different environments.

Group DRO \cite{hu2018does,oren2019distributionally,sagawa2019distributionally} takes into account the data generating distributions of the environments $\{\mathbb{P}_{X,Y}^1, \cdots, \mathbb{P}_{X,Y}^G\}$ and learns a model that performs best for the worst-case distribution among the groups. To this end, let $$\mathcal{Q}:= \left\{\sum_{g = 1}^G q_g \mathbb{P}_{X,Y}^g: q \in \Delta_G\right\}$$ denote an ambiguity set consisting of all linear combinations of the individual group distributions, where $q=[q_1, \cdots, q_G]$ denotes the vector of linear weights and $\Delta_G$ denotes the $G$-dimensional probability simplex. Then, the goal of Group DRO is to learn a model $f_\theta$ that optimizes the worst-case expected loss over the ambiguity set $\mathcal{Q}$, i.e.,
\begin{equation}
    \label{eqn:group-dro1}
    \min_{\theta \in \Theta} \sup_{Q \in \mathcal{Q}} \mathbb{E}_{(x,y) \sim Q} [\mathcal{L}(f_\theta; x,y)].
\end{equation}

Since each distribution $Q \in \mathcal{Q}$ is a weighted combination of the local group distributions $\mathbb{P}_{X,Y}^g$, Group DRO effectively searches over the space of possible weighted combinations of the local group distributions to emphasize those with higher loss. Therefore, instead of biasing the model toward environments that dominate the training set, as in the case of ERM, Group DRO encourages the model to improve performance on the most challenging or under-performing environments.

Finally, since each distribution $Q \in \mathcal{Q}$ is a weighted combination of the local group distributions $\mathbb{P}_{X,Y}^g$, the expected loss over $Q$ can be written as
$$\mathbb{E}_{(x,y) \sim Q} [\mathcal{L}(f_\theta; x,y)]=  \sum_{g = 1}^G q_g \mathbb{E}_{(x,y)\sim \mathbb{P}_g}[\mathcal{L}(f_\theta; x,y)] = \sum_{g=1}^G q_g \mathcal{L}_g(f_\theta),$$ 
where $\mathcal{L}_g(f_\theta) := \mathbb{E}_{(x,y) \sim \mathbb{P}_g}[\mathcal{L}(f_\theta; x, y)]$ denotes the expected group-level loss. Substituting this expectation into \eqref{eqn:group-dro1}, we obtain an equivalent min-max formulation of the Group DRO problem as
\begin{equation}
\label{eqn:group-dro2}
\min_{\theta \in \Theta} \max_{q \in \Delta_G} \sum_{g = 1}^G q_g  \mathcal{L}_g(f_\theta).
\end{equation}

\subsection{Distribution Shifts Across and Within Groups}

Different to existing literature, in this paper we assume that the data generating distribution $\mathbb{P}
_{X,Y}^g$ in each environment is unknown. Even though the empirical distribution $\hat{\mathbb{P}
}_{X,Y}^g$ of each group $g \in \{1, \cdots, G\}$ can be estimated from training data and used to approximate the true data generating distribution $\mathbb{P}
_{X,Y}^g$, this approximation can contain errors due to finite-sampling bias or possible distribution shifts.  

To address uncertainty in the data generating distributions in the local environments, we extend the group DRO framework discussed in Section~\ref{sec:group_DRO} by combining it with a local DRO objective at each local environment. The goal is to learn models that are robust to both changes in the mixture of the local environments as well as to distribution shifts within each of the local environments. Specifically, given a distance metric $D$ between distributions, let $$\mathcal{P}_g = \{\mathbb{P}: D(\mathbb{P}, \hat{\mathbb{P}}_{X,Y}^g) \leq \epsilon_g\}$$ denote an ambiguity set containing all possible data generating distributions for group $g\in \{1,\cdots,G\}$, that is a ball of radius $\epsilon_g >0$ around the empirical distribution $\hat{\mathbb{P}}_{X,Y}^g$. As a distance metric between two distributions we use the 1-Wasserstein metric defined below.

\begin{definition}[1-Wasserstein Distance {\cite{villani2009optimal}}]
Let $\mathbb{P}$ and $\mathbb{P}'$ be two probability distributions on a Polish space $\Xi \subseteq \mathbb{R}^d$ with finite second moments. Let $\Gamma(\mathbb{P}, \mathbb{P}')$ denote the set of all couplings (i.e., joint distributions) with marginals $\mathbb{P}$ and $\mathbb{P}'$ and let $c: \Xi \times \Xi \rightarrow [0, \infty)$ denote the transportation cost. Then, the 1-Wasserstein distance between $\mathbb{P}$ and $\mathbb{P}'$ is defined as
\[
W_1(\mathbb{P}, \mathbb{P}') :=  \inf_{\gamma \in \Gamma(\mathbb{P}, \mathbb{P}')} \int_{\Xi \times \Xi} c(x, y)\, d\gamma(x, y) .
\]
\end{definition}

Given the uncertainty set $\mathcal{P}_g$ for each environment $g \in \{1, \cdots, G\}$, we can define the robust group-level loss as
\begin{equation}
    \label{eqn: rob_env_loss}
    \mathcal{L}_g^{ROB}(f_\theta) = \sup_{\mathbb{P}_{X,Y}^g \in \mathcal{P}_g}\mathbb{E}_{(x,y) \sim \mathbb{P}_{X,Y}^g} [\mathcal{L}(f_\theta; x,y)]. 
\end{equation}

Substituting the robust group-level loss into \eqref{eqn:group-dro2}, we obtain the proposed group DRO under group-level distributional uncertainty problem with the objective to learn a model $f_{\theta}$ that is robust to both the worst-case mixture of environments and to distribution shifts within environments, i.e.,
\begin{equation}
    \label{eqn: fina_rob}
    \min_{\theta \in \Theta} \max_{q \in \Delta_G} \sum_{g = 1}^G q_g \mathcal{L}_g^{ROB}(f_\theta).    
\end{equation}

\section{Methodology}
\label{sec:methodology}
In this section, we introduce a gradient method to solve the min-max Group DRO with group-level distributional uncertainty problem \eqref{eqn: fina_rob}. In particular, we design an iterative algorithm that at each iteration first computes the robust loss $\mathcal{L}_g^{ROB}$ for each environment $g\in\{1,\cdots,G\}$, and then performs gradient descent mirror ascent steps to update the parameters of the min-max Group DRO problem. In the following subsections we analyze these two parts of our algorithm. 

\subsection{Lagrangian Relaxation of the Robust Group-Level Loss}

\begin{algorithm}[t]
    \caption{Gradient Ascent for Worst-case Perturbations}
    \label{alg: dro_ga}
    \begin{algorithmic}[1]
    \Require $(x, y)$, $\eta_z$, $T_{rob}$, cost function $c: (\mathcal{X} \times \mathcal{Y})\times (\mathcal{X} \times \mathcal{Y}) \rightarrow \mathbb{R}_+$ and model parameters $f_\theta$
    \State Initialize $z^0$ with $(x, y)$
    \For{$t = 1$ to ($T_{rob}$)}
        \State $\phi(f_\theta; (x, y), z^{t-1}) = \mathcal{L}(f_\theta; z^{t-1}) - \gamma c((x, y), z^{t-1})$ 
        \State $z^t \leftarrow z^{t-1} + \eta_z \nabla_{z}\phi(f_\theta; (x, y), z^{t-1})$
    \EndFor
    \State Return $z^{T_{rob}}$
    \end{algorithmic}
\end{algorithm}
Directly computing the robust group-level loss $\mathcal{L}_g^{ROB}$ for every group $g\in\{1,\cdots,G\}$ is generally intractable, since it requires computing the supremum over an infinite ambiguity set of distributions $\mathcal{P}_g$. When the model $f_\theta$ is convex with respect to the model parameters $\theta$, e.g., linear \cite{chen2018robust} or a logistic regression \cite{shafieezadeh2015distributionally}, the dual formulation of the robust loss in \eqref{eqn: rob_env_loss} can be cast as a convex optimization problem that can be efficiently solved using existing solvers.

However, when the model $f_\theta$ is not convex with respect to the model parameters $\theta$ as, e.g., in the case of Neural Networks, to the best of our knowledge, tractable dual formulations of the robust loss \eqref{eqn: rob_env_loss} do not exist. In this case, to compute the robust loss we instead resort to its Lagrangian relaxation
\begin{equation}
    \label{eqn: langr_loss}
    \mathcal{L}_{g,\gamma}^{ROB}(f_\theta) = \sup_{\mathbb{P}_{X,Y}^g \in \mathcal{P}_g} \biggl[\mathbb{E}_{(x,y) \sim \mathbb{P}_{X,Y}^g} [\mathcal{L}(f_\theta; x,y)] - \gamma W_1(\mathbb{P}_g, \hat{\mathbb{P}_g})\biggr],
\end{equation}
where $\gamma \geq 0$ is a fixed penalty parameter. The Lagrangian relaxation of the robust loss in \eqref{eqn: langr_loss} can be efficiently computed as shown in the following result.

\begin{proposition} \label{prop:robust_loss}(Proposition 1 in \cite{sinha2017certifying})
    Let $\mathcal{L}: \Theta \times (\mathcal{X}, \mathcal{Y}) \rightarrow \mathbb{R}$ be a loss function and $c: (\mathcal{X}, \mathcal{Y}) \times (\mathcal{X}, \mathcal{Y}) \rightarrow \mathbb{R}_+$ be a continuous transportation cost function. Then, for any distribution $\hat{\mathbb{P}}_{X,Y}^g$, $\gamma \geq 0$, and any uncertainty set $\mathcal{P} = \{\mathbb{P}: W_1(\mathbb{P}, \hat{\mathbb{P}}_{X,Y}^g) \leq \epsilon\}$ we have 
    \begin{equation}
        \label{eqn: langr_loss_relaxed}
        \mathcal{L}_{g, \gamma}^{ROB}(f_\theta)= \mathbb{E}_{(x,y) \sim \hat{\mathbb{P}}_{X,Y}^g} \biggl[\sup_{(x^\prime, y^\prime) \in \mathcal{X} \times \mathcal{Y}} \phi(f_\theta; (x, y), (x^\prime, y^\prime))\biggr],
    \end{equation}
     where $\phi(f_\theta; (x, y), (x^\prime, y^\prime)) = \mathcal{L}(f_\theta; x^\prime,y^\prime) - \gamma c((x,y), (x^\prime, y^\prime))$ is a penalized loss. 
\end{proposition}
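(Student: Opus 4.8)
The plan is to exploit the coupling representation of the Wasserstein distance to turn the supremum over the infinite-dimensional set of distributions into a pointwise supremum over $\mathcal{X}\times\mathcal{Y}$. First I would substitute the coupling definition $W_1(\mathbb{P}_g,\hat{\mathbb{P}}_g)=\inf_{\pi\in\Gamma(\mathbb{P}_g,\hat{\mathbb{P}}_g)}\int c\,d\pi$ (using $\pi$ for couplings, to avoid the clash with the penalty $\gamma$) into the Lagrangian relaxation \eqref{eqn: langr_loss}. Since this infimum enters with a negative sign, $-\gamma\inf_\pi\int c\,d\pi=\sup_\pi\bigl(-\gamma\int c\,d\pi\bigr)$, the supremum over $\mathbb{P}_g$ and the supremum over couplings $\pi$ merge into one joint supremum, and the objective becomes $\int\bigl[\mathcal{L}(f_\theta;x',y')-\gamma c((x,y),(x',y'))\bigr]\,d\pi=\int\phi(f_\theta;(x,y),(x',y'))\,d\pi$, where in the coupling $(x',y')$ carries the first marginal $\mathbb{P}_g$ and $(x,y)$ carries the second marginal $\hat{\mathbb{P}}_g$. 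As in the cited penalized formulation, the hard radius constraint is subsumed by the penalty, so the relevant supremum is effectively over all distributions, i.e. over couplings with free first marginal.

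The key structural observation is that a coupling $\pi$ whose second marginal equals $\hat{\mathbb{P}}_g$ already determines its first marginal, which is exactly the role played by $\mathbb{P}_g$. Hence the joint supremum over the pair $(\mathbb{P}_g,\pi)$ collapses to a supremum over the single family of couplings $\pi$ with second marginal fixed to $\hat{\mathbb{P}}_g$ and free first marginal. Next I would disintegrate each such coupling with respect to $\hat{\mathbb{P}}_g$ — legitimate because $\Xi$ is Polish and regular conditional distributions exist — writing $d\pi(x',y',x,y)=d\hat{\mathbb{P}}_g(x,y)\,\kappa_{(x,y)}(dx',dy')$ for a Markov kernel $\kappa$. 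The objective then reads $\int\bigl[\int\phi\,d\kappa_{(x,y)}\bigr]\,d\hat{\mathbb{P}}_g(x,y)$, and the supremum over couplings turns into a supremum over kernels $\kappa$.

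The crucial step is to interchange this supremum with the outer expectation. Because the inner integral is linear in $\kappa_{(x,y)}$, for each fixed $(x,y)$ the optimal kernel concentrates all mass at the maximizer, giving the pointwise identity $\sup_{\kappa_{(x,y)}}\int\phi\,d\kappa_{(x,y)}=\sup_{(x',y')}\phi(f_\theta;(x,y),(x',y'))$, so pushing the supremum inside the expectation yields precisely \eqref{eqn: langr_loss_relaxed}. I expect this interchange to be the main obstacle: it is not a trivial swap of $\sup$ and $\int$ but an appeal to an interchangeability (measurable selection) principle for normal integrands, which requires the map $(x,y)\mapsto\sup_{(x',y')}\phi$ to be measurable and near-optimal selectors $(x,y)\mapsto(x',y')$ to exist measurably. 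This is exactly where the assumed continuity of the transportation cost $c$ (together with mild regularity of $\mathcal{L}$ in its data argument, ensuring $\phi$ is an upper semicontinuous normal integrand) is used; I would close the argument by invoking a standard interchange theorem (e.g. Rockafellar--Wets) to justify the swap.
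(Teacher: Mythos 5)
Your proof is correct and follows essentially the same route as the proof the paper defers to (it cites Proposition 1 of \cite{sinha2017certifying} rather than reproving it): rewriting the penalized objective via the coupling representation of $W_1$, merging the two suprema, disintegrating the coupling into a Markov kernel against $\hat{\mathbb{P}}_{X,Y}^g$, and invoking a Rockafellar--Wets--type interchange/measurable-selection theorem for normal integrands to swap the supremum with the expectation. You also correctly handle the one loose point in the paper's statement --- the supremum must effectively range over all distributions, with the ball constraint absorbed by the penalty $\gamma$ --- which is exactly how the cited result is stated in the original reference.
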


Proposition~\ref{prop:robust_loss} allows to compute the robust loss $\mathcal{L}_{g, \gamma}^{ROB}$ of each group $g\in\{1,\cdots,G\}$ by exhaustively searching over the support $\mathcal{X} \times \mathcal{Y}$ for points $(x^\prime, y^\prime)$ that maximize the penalized loss $\phi(f_\theta; (x, y), (x^\prime, y^\prime))$. However, when the support $\mathcal{X} \times \mathcal{Y}$ is large or infinite -- e.g., for continuous variables -- searching over the support can be computationally intractable. In this case, similar to the work in \cite{sinha2017certifying}, we propose a gradient ascent algorithm to instead approximate the robust loss $\mathcal{L}_{g, \gamma}^{ROB}$. The proposed algorithm is summarized in Algorithm~\ref{alg: dro_ga}. Specifically, for every sample $(x,y)\in D_{train}$ in the training set, Algorithm~\ref{alg: dro_ga} iteratively updates the point $z=(x^\prime, y^\prime)\in \mathcal{X} \times \mathcal{Y}$ to maximize the penalized loss $\phi(f_\theta; (x, y), (x^\prime, y^\prime))$.

\subsection{Solution of the Group DRO with Group-Level Distributional Uncertainty Problem}

\begin{algorithm}[t]
\caption{Group DRO with distributional uncertainty per group}
\label{alg: minimaxfairauc}
\begin{algorithmic}[1]
\Require Training set $D_{train} = \{x_i, y_i, g_i\}_{i=1}^N$, model $f_\theta$, number of iterations $T$, learning rates $\{ \eta_\theta, \eta_q\}$, cost parameter $\gamma$, cost function $c: (\mathcal{X} \times \mathcal{Y})\times (\mathcal{X} \times \mathcal{Y}) \rightarrow \mathbb{R}_+$
\State Initialize $\theta_0 \in \Theta$ randomly and  $q_g^0 = \frac{N_g}{N} \text{ }\forall g \in \{1, \cdots, G\}$
\For{$t = 1$ to $T$}
    \State $B_t = []$
    \For{$(x_i, y_i, g_i) \in D_{train}$}
        \State Use Algorithm \ref{alg: dro_ga} to obtain $z_i = arg\max_{z \in (\mathcal{X}\times \mathcal{Y})} \phi(f_{\theta_{t-1}}; (x_i, y_i), z)$
    \EndFor
    \For{$g \in \{1, \cdots, G\}$}
        \State $\mathcal{L}_{g, \gamma}^{ROB} (f_{\theta_{t-1}}) = \frac{1}{N_g}\sum_{i=1}^{N_g} \phi(f_{\theta_{t-1}}; (x_i, y_i), z_i)$
        \State $\nabla \mathcal{L}_{g, \gamma}^{ROB} (f_{\theta_{t-1}}) = \frac{1}{N_g}\sum_{i=1}^{N_g} \pdv{\phi(f_{\theta_{t-1}}; (x_i, y_i), z_i)}{\theta}$ 
    \EndFor
    \State $m_g^t \gets q_g^{t-1} \exp\biggl\{\eta_q\mathcal{L}_{g, \gamma}^{ROB} (f_{\theta_{t-1}}) \biggr\} \text{  } \forall g \in \{1, \cdots, G\}$
    \State $q_g^t \gets \frac{m_g^t}{\sum_{g=1}^G m_g^t} \text{  } \forall g \in \{1, \cdots, G\}$     
    \State $\theta_t \gets \theta_{t-1} - \eta_\theta  \sum_{g=1}^G q_g^{t} \nabla \mathcal{L}_{g, \gamma}^{ROB} (f_{\theta_{t-1}})$
    
\EndFor
\end{algorithmic}
\end{algorithm}

In this section we present an algorithm to solve the min-max group DRO with group-level distributional uncertainty problem \eqref{eqn: fina_rob}.
The proposed algorithm is summarized in Algorithm \ref{alg: minimaxfairauc}.
Specifically, Algorithm \ref{alg: minimaxfairauc} is an iterative method that consists of the following steps. First, for each point in the training dataset $(x_i,y_i, g_i)\in D_{train}$, Algorithm \ref{alg: dro_ga} is used to compute the point $z_i$ that, given the current model parameters $\theta_{t-1}$, maximizes the penalized loss $\phi(f_{\theta_{t-1}}; (x_i, y_i), z_i)$, as shown in lines 4-6. Then, using \eqref{eqn: langr_loss_relaxed} and the point $z_i$, Algorithm~\ref{alg: minimaxfairauc} computes the approximate expected robust loss $\mathcal{L}_{g, \gamma} ^{ROB}(f_{\theta_{t-1}})$ and its gradient with respect to $\theta$, $\nabla\mathcal{L}_{g, \gamma} ^{ROB}(f_{\theta_{t-1}})$, for the current model $\theta_{t-1}$ and each environment $g \in \{1, \cdots, G\}$, as shown in lines 7-10. Next, using the robust losses computed in line 8, the weights $q_g^t$ for each environment $g \in \{1, \cdots G\}$ are updated by a mirror ascent step, as shown in lines 11-12. Finally, given the updated weights $q_g^t$ and the robust gradients for each environment, Algorithm~\ref{alg: minimaxfairauc} updates the model parameters $\theta_t$ by a gradient descent step, as shown in line 13.
The Convergence Analysis of the proposed Algorithm is provided in the Appendix.
\section{Numerical Experiments}
\label{sec:experiments}

In this section, we evaluate the performance of our proposed method against several baseline approaches on a real-world datasets from the field of finance. Our framework is designed to address both heterogeneity across different subpopulations in the data and possible distribution shifts within individual subpopulations between the training and test sets. Through our experiments, we aim to demonstrate that failing to account for either of these two objectives can lead to sub-optimal performance when both are existent in the data. To this end, we compare against three baseline methods: (1) Empirical Risk Minimization (ERM) that trains a model to minimize the average loss over the entire training set without accounting for group identities or distribution uncertainties, (2) Distributionally Robust Optimization (DRO) as proposed in \cite{sinha2017certifying} that accounts for distributional shifts between the training and test sets but ignores group structure, and (3) Group DRO (GDRO) as proposed in \cite{sagawa2019distributionally} that explicitly addresses group-wise performance disparities but assumes full knowledge of each group’s data generating distribution without modeling within-group distributional uncertainty.

Our method, as well as the baseline approaches, is model-agnostic. For evaluation, we instantiate the model function $f_\theta$ as a simple feedforward neural network composed of two fully connected layers. The first layer maps the input features to a 64-dimensional hidden representation, followed by an Exponential Linear Unit (ELU) activation. This is succeeded by a second hidden layer with 32 units and another ELU activation. The final output is produced by a linear layer projecting the hidden representation to a single scalar value. We use the same model architecture across all our following experiments. 

We evaluate all methods on test sets that have not been observed during training. To assess the effectiveness of each method, we focus on their accuracy. For each group $g \in \{1, \cdots, G\}$ with a test dataset $D_{test}^g = \{(x_i^g, y_i^g)\}_{i=1}^{N_g}$, we define the accuracy of a model $f_\theta$ as 
$$Acc_g = \sum_{i=1}^{N_g}\frac{\mathbbm{1}\{y_i^g == f_\theta(x_i^g)\}}{N_g} 100\%$$

Given the accuracy for each group, we report three evaluation metrics:\\ 
(i) the average accuracy across all groups $$\text{Average Accuracy} := \frac{\sum_{g=1}^G Acc_g}{G},$$ (ii) the range of accuracies among groups $$\text{Range of Accuracy} := \max_{g \in \{1, \cdots, G\}} Acc_g - \min_{g \in \{1, \cdots, G\}} Acc_g,$$ and (iii) the worst-case accuracy observed across groups $$\text{Worst-Case Accuracy}:= \min_{g \in \{1, \cdots, G\}} Acc_g$$

Consistent with the evaluation protocol in \cite{sagawa2019distributionally}, we report both the average accuracy and the worst-case group accuracy to assess overall model performance and its ability to generalize to the most under-performing group. In addition to these metrics, we include the accuracy range across groups to quantify performance disparities, thereby providing a complementary measure of fairness and consistency in group-level outcomes. The higher the average and worst-case accuracy and the lower the range of accuracy, the better a model's performance.

 Finally, training and evaluation were executed on a Linux workstation with two NVIDIA TITAN RTX cards (24\,GB each, driver 545.23.08, CUDA 12.3).
\subsection{Distributional Shift due to Environmental shifts}\label{sec:Income_prediction}
\begin{figure}[t]
  \centering
  \includegraphics[width=\linewidth]{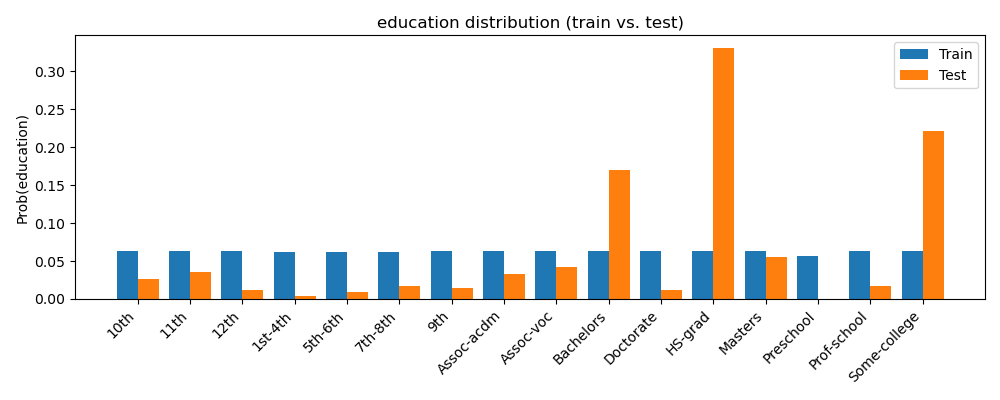}
  \caption{Train–test shift in the \texttt{education} marginal on the Adult dataset.
  We construct training splits with a uniform distribution over \texttt{education}, while the test split retains the dataset’s natural distribution. The example shown is for seed $42$; other seeds (\,$18$, $1999$, $2025$, etc.) realize the same pattern by construction.}
  \label{fig:education-shift}
\end{figure}

We first evaluate on the Adult Income dataset \cite{adult_2}, a widely used benchmark of 47{,}621 individuals with 15 demographic and occupational features (e.g., age, race, gender, education level, marital status, occupation) and a binary label indicating whether annual income exceeds $\$50{,}000$.

The dataset is frequently used in robustness studies because of pronounced group heterogeneity. Prior work (e.g., \cite{soma2022optimal, zhang2023stochastic}) typically defines groups using sensitive attributes such as race and gender. Following \cite{sagawa2019distributionally}, we adopt six intersectional groups based on race $\{\text{White}, \text{Black}, \text{Other}\}$ crossed with the income label $\{\leq $50\text{K}, > $50\text{K}\}$:
\begin{itemize}
  \item \textbf{Group 0} — \emph{White, income $>\!50$K}: 10{,}485 individuals ($\approx 22\%$).
  \item \textbf{Group 1} — \emph{White, income $\le\!50$K}: 30{,}301 individuals ($\approx 64\%$).
  \item \textbf{Group 2} — \emph{Black, income $>\!50$K}: 555 individuals ($\approx 1\%$).
  \item \textbf{Group 3} — \emph{Black, income $\le\!50$K}: 3{,}980 individuals ($\approx 8\%$).
  \item \textbf{Group 4} — \emph{Other race, income $>\!50$K}: 501 individuals ($\approx 1\%$).
  \item \textbf{Group 5} — \emph{Other race, income $\le\!50$K}: 1{,}799 individuals ($\approx 4\%$).
\end{itemize}

We run ten independent trials with fixed seeds $\{42, 18, 2025, 1999, 1453, 1821, 2023, 2024, 2020, 2021\}$. Our goal is twofold: first, to compare our method against baselines for robustness across the above groups; second, to assess robustness under a train–test distribution shift. To induce a controlled covariate shift, we construct training splits with a \emph{uniform} marginal over \texttt{education} and evaluate on test splits that follow the original Adult distribution. The resulting shift is illustrated in Figure~\ref{fig:education-shift}.
\newpage
\begin{table}[H]
  \centering
  \small
  \setlength{\tabcolsep}{6pt} 
  \renewcommand{\arraystretch}{1.15} 

  \caption{Evaluation results of the four methods for the Income prediction task. 
  Best is strong blue, runner-up light blue, worst light red.}
  \label{tab:seed42_results}

  \begin{tabularx}{\linewidth}{ll|*{4}{>{\centering\arraybackslash}X}}
    \toprule
    \multicolumn{2}{c|}{} & \multicolumn{4}{c}{\textbf{Task C}}\\
    \multicolumn{2}{c|}{} & ERM & DRO ($\gamma{=}9$) & Group & Ours ($\gamma{=}10^{-4}$)\\
    \midrule
    \multirow{3}{*}{}
    & Average Accuracy
      & \cellcolor{lightred}$0.5471 \pm 0.0100$
      & \cellcolor{lightred}$0.5165 \pm 0.0037$
      & \cellcolor{lightblue}$0.6953 \pm 0.0269$
      & \cellcolor{strongblue}$0.7148 \pm 0.0105$\\
    & Worst-Group Accuracy
      & \cellcolor{lightred}$0.0815 \pm 0.0220$
      & \cellcolor{lightred}$0.0287 \pm 0.0073$
      & \cellcolor{lightblue}$0.5607 \pm 0.0388$
      & \cellcolor{strongblue}$0.6126 \pm 0.0605$\\
    & Accuracy Range
      & \cellcolor{lightred}$0.9154 \pm 0.0241$
      & \cellcolor{lightred}$0.9709 \pm 0.0077$
      & \cellcolor{lightblue}$0.2571 \pm 0.0789$
      & \cellcolor{strongblue}$0.1934 \pm 0.0927$\\
    \bottomrule
  \end{tabularx}
\end{table}

For the robustness parameter $\gamma$, we sweep $\gamma \in \{10^{-4}, 10^{-3}, 10^{-2}, 10^{-1}, 0.5, 1, 3, 5, 6, 7, 8\}$ to select the best value via fine-tuning and to study sensitivity. The remaining hyperparameters in Algorithm~\ref{alg: dro_ga} are
$\eta_{\theta}=0.1$, $\eta_{q}=0.1$, $\eta_{z}=0.05$, $T_{\text{rob}}=100$, and $T=200$.

Numerical results, summarized in Table \ref{tab:seed42_results}, reveal three clear trends. First, both ERM and standard DRO perform very poorly: ERM achieves only $0.55$ average accuracy with worst-group accuracy just $0.08$, and both methods exhibit extreme disparity across groups, with accuracy ranges exceeding $0.9$. Second, Group DRO substantially improves fairness by boosting worst-group accuracy to $0.56$ and reducing the accuracy gap to $0.26$. Third, our method achieves the best trade-off across all metrics: it reaches the highest mean accuracy ($0.715 \pm 0.011$), the highest worst-group accuracy ($0.613 \pm 0.061$), and the lowest disparity ($0.193 \pm 0.093$). 

Figure~\ref{fig:adult_gamma} further illustrates how performance evolves as $\gamma$ is varied. Our method is consistently stable across the range of values, with average accuracy remaining above $0.70$, worst-group accuracy above $0.58$, and range below $0.22$. By contrast, standard DRO models collapse across all $\gamma$, with worst-group accuracies near~0 and accuracy ranges close to~1. Together, these results confirm that our approach not only outperforms all baselines numerically, but also remains robust and reliable across hyperparameter choices.
\begin{figure}[H]
    \centering
    \includegraphics[width=\linewidth]{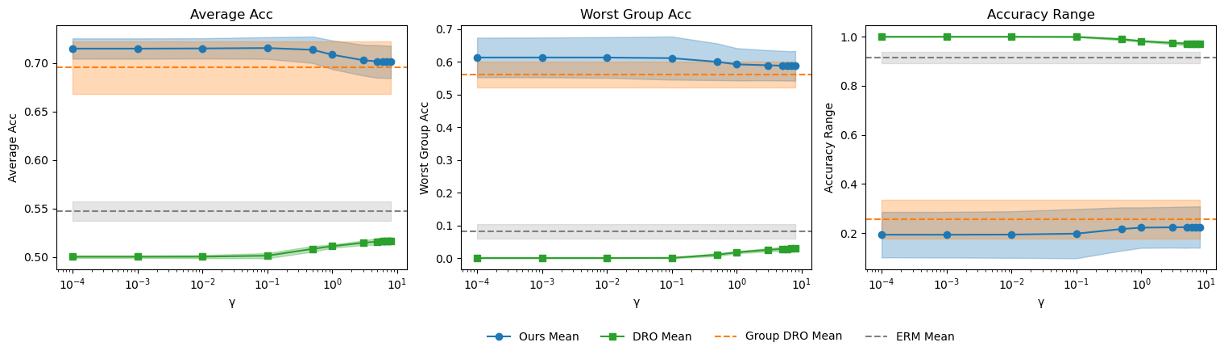}
    \caption{Performance of ERM, DRO, Group DRO, and our method on the \texttt{Adult} dataset under distribution shift on the \textit{education} attribute as a function of $\gamma$.}
    \label{fig:adult_gamma}
\end{figure}

\subsection{Evaluation on Multiple Environmental Shifts}\label{sec:multi_env_shifts}

\begin{figure}[H]
  \centering
  \begin{subfigure}{0.8\linewidth}
    \centering
    \includegraphics[width=\linewidth]{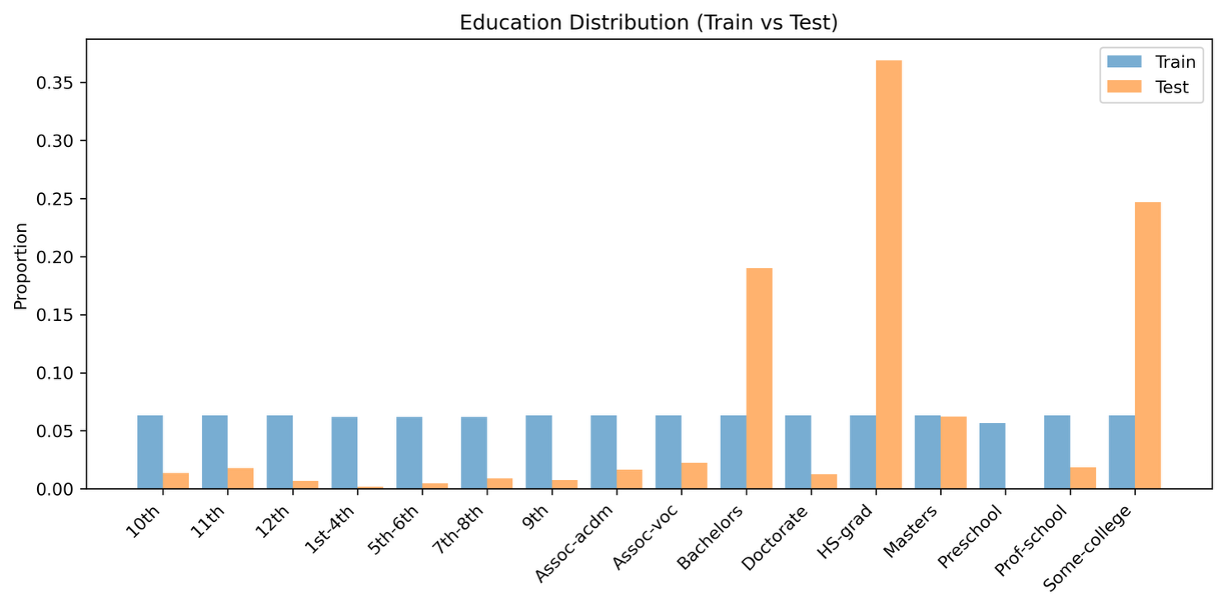}
    \caption{Test environment with 90\% above threshold and 10\% below threshold.}
    \label{fig:educ-90-10}
  \end{subfigure}

  \begin{subfigure}{0.8\linewidth}
    \centering
    \includegraphics[width=\linewidth]{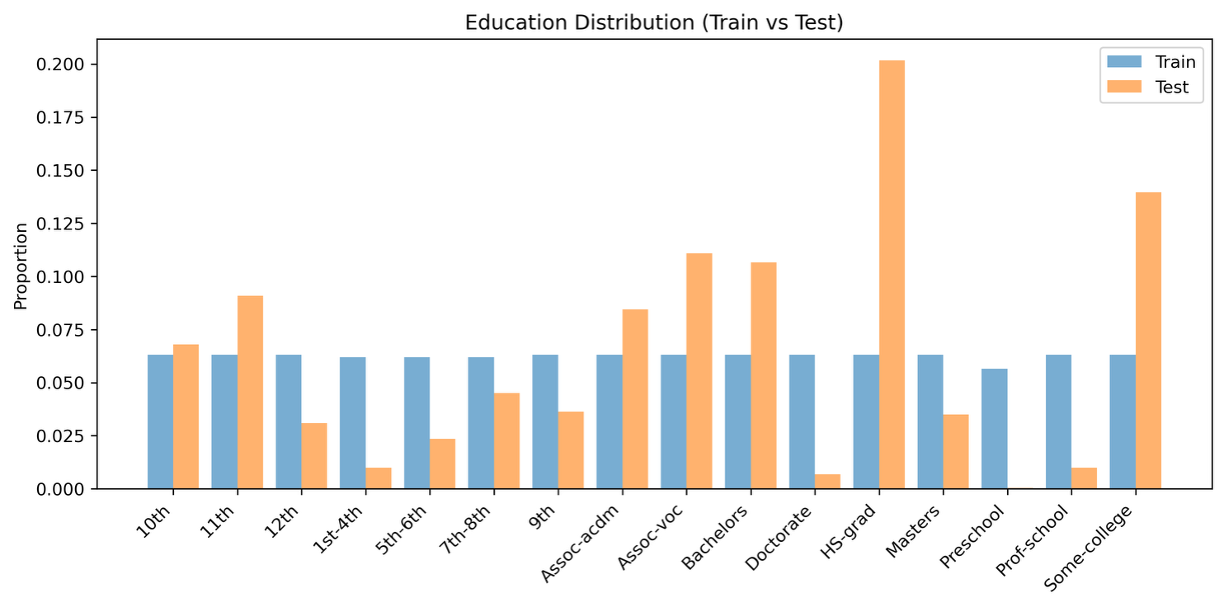}
    \caption{Balanced test environment with 50\%–50\% split.}
    \label{fig:educ-50-50}
  \end{subfigure}

  \begin{subfigure}{0.8\linewidth}
    \centering
    \includegraphics[width=\linewidth]{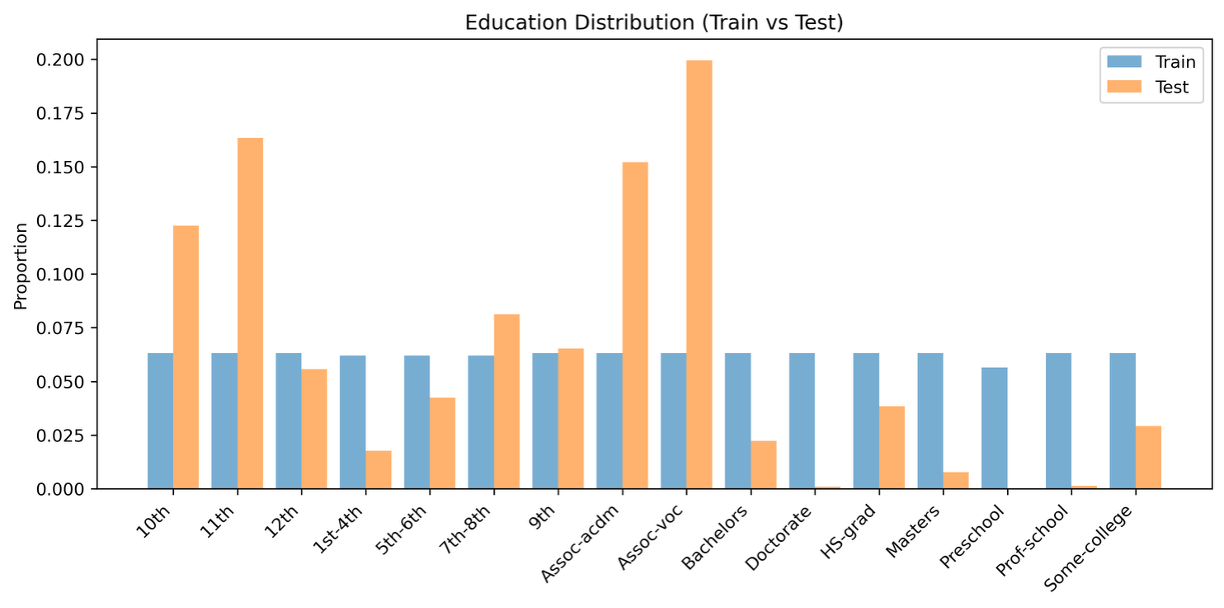}
    \caption{Test environment with 10\% above threshold and 90\% below threshold.}
    \label{fig:educ-10-90}
  \end{subfigure}

  \caption{Examples of constructed test environments by varying the proportion of samples above vs.\ below the education threshold. Shown are the extreme settings (90–10, 10–90) and the balanced case (50–50).}
  \label{fig:education-envs}
\end{figure}

In section \ref{sec:Income_prediction}, we considered a single controlled covariate shift between training and test distributions based on the \texttt{education} attribute. We now extend this analysis conducted on the Adult dataset by evaluating model performance across \emph{multiple test environments} in order to more thoroughly assess robustness.

The models are trained on the same training splits as described in Section~\ref{sec:Income_prediction}, where the marginal distribution of \texttt{education} is uniform. However, instead of a single test distribution, we construct a family of environments that vary systematically in their composition. Specifically, we use the encoded values of the \texttt{education} attribute\footnote{The encoding has been obtained via \texttt{StandardScaler} from \texttt{scikit-learn}.}. Based on a fixed threshold of $0.5$, we partition the population into two groups: those with \texttt{education} values below the threshold and those above. By varying the relative sizes of these two groups, we create environments with progressively different marginals.

In particular, we generate test distributions ranging from extremely imbalanced environments (90\% high vs.\ 10\% low and conversely 10\% high vs.\ 90\% low) to the balanced case (50\% high vs.\ 50\% low). This enables us to investigate not only average predictive performance, but also how stability is affected as the underlying distribution of education levels shifts dramatically. Figure~\ref{fig:education-envs} illustrates three representative examples of such constructed environments, depicted in terms of the \emph{decoded} education categories for interpretability.

This experimental design allows us to probe the extent to which each method is robust to unseen environmental changes, going beyond a single fixed train–test split. In practice, such variations are common in real-world deployment, where underlying demographics and education levels can vary substantially across regions or over time. 
\begin{figure}[t]
  \centering
  \includegraphics[width=\linewidth]{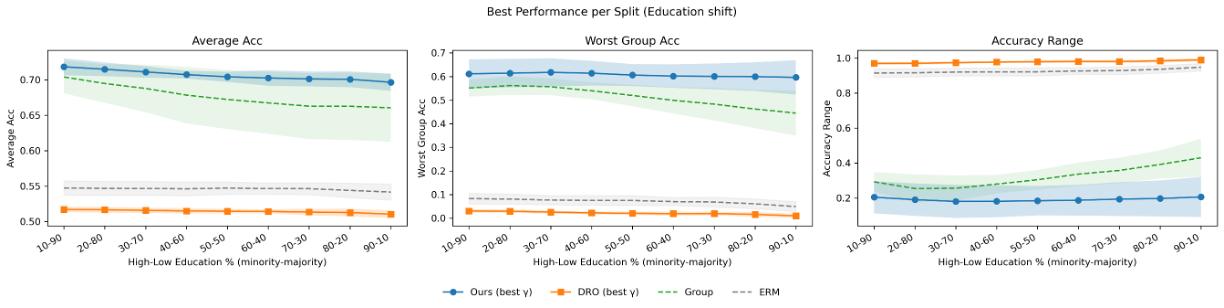}
  \caption{Best performance across education splits for all methods. 
  Models are trained on Adult with uniform training distribution over \texttt{education}. 
  Test environments are constructed with varying proportions of high- vs.\ low-education groups 
  (from $90$--$10$ to $10$--$90$). Plotted are the average accuracy, worst-group accuracy, 
  and accuracy range of each method at their best-performing $\gamma$ (if applicable).}
  \label{fig:adult-best-split}
\end{figure}

We report the best performance of each method across these splits in Figure~\ref{fig:adult-best-split}. Several consistent patterns emerge. First, ERM is unable to adapt to severe distributional imbalances: average accuracy remains near $0.55$ across all splits, worst-group accuracy collapses below $0.1$, and disparity across groups is consistently large ($\approx 0.9$). Standard DRO shows similar flaws, with even lower subgroup performance and accuracy ranges close to one in every environment. Group DRO improves worst-group accuracy substantially (around $0.40$–$0.45$) and narrows the group disparity. 

By contrast, our approach achieves the strongest and most stable performance across all environments. Average accuracy remains above $0.70$ even under extreme 90–10 splits, while worst-group accuracy is consistently high ($0.60$–$0.63$). Moreover, our method attains the lowest disparity, with accuracy ranges around $0.18$–$0.22$, nearly halving the gap compared to Group DRO and dramatically below DRO and ERM. Importantly, these advantages persist across the entire spectrum of test-set compositions, confirming that our method is robust not only to a single covariate shift but also to broad and heterogeneous environmental changes.

\section{Discussion}
The results across Sections~\ref{sec:Income_prediction}--\ref{sec:multi_env_shifts} highlight the importance of explicitly addressing both cross-group heterogeneity and distributional uncertainty in robust learning. In all tasks, Empirical Risk Minimization (ERM) is shown to be inadequate: it attains moderate average accuracy but collapses on minority subgroups, with worst-group accuracies close to zero and large disparities. This behavior reflects ERM’s tendency to overfit towards majority-dominated patterns, yielding models that perform poorly under imbalance or shift.

Standard DRO, while aiming to provide robustness, inherits similar limitations. Because a single ambiguity set is defined around the empirical distribution, the DRO objective remains dominated by well-sampled majority groups, offering little protection for rare or underrepresented subgroups. As a result, DRO performs even worse than ERM in some cases, with worst-group accuracies dropping to near-zero, and accuracy ranges approaching one. This emphasizes that classical distributional robustness, without group structure, cannot adequately safeguard vulnerable populations under domain shift.

Group DRO (GDRO) partially resolves these issues by focusing directly on the worst-performing group. In the experiment of Section \ref{sec:Income_prediction}, GDRO substantially improves minority-group performance and reduces disparities. However, these gains typically come at the expense of mean accuracy, since GDRO may overweight small and noisy groups. Moreover, as seen in the multiple-environment experiments, GDRO’s improvements are not consistent when covariate distributions vary more dramatically across test environments. The reliance on accurately estimated subgroup distributions makes it susceptible to uncontrolled shifts.

Our method achieves the strongest and most stable performance across all scenarios. On Adult with a single education-based covariate shift, it simultaneously maximizes average accuracy ($0.715$), worst-group accuracy ($0.613$), and minimizes disparities ($0.193$). Most notably, in the multi-environment Adult analysis, our method retains stable performance across a wide range of distributional shifts, maintaining accuracy above $0.70$, worst-group performance above $0.60$, and low disparity (below $0.22$) even under extreme 90--10 environment splits. These results demonstrate that the method not only exceeds Group DRO under imbalanced conditions, but also exhibits superior robustness under distributional shift, where its advantages become clearest.

A practical consideration in deployment is the choice of the robustness parameter $\gamma$ in our framework. Unlike the Wasserstein radius $\epsilon$ in classical DRO, which has a direct geometric interpretation, $\gamma$ serves as a cost penalty in the Lagrangian relaxation and lacks a physical meaning. Nonetheless, our experiments show that performance is remarkably stable across orders of magnitude of $\gamma$. For instance, in Section \ref{sec:Income_prediction}, the method maintains strong worst-group performance and small accuracy gaps across $\gamma \in [10^{-4}, 10^1]$. This insensitivity suggests that $\gamma$ need not be finely tuned, making the method easier to apply in practice. Conceptually, very small $\gamma$ values recover behavior closer to standard DRO, while very large $\gamma$ approximates Group DRO. Our results show that in between these two extremes, our method consistently finds solutions that balance fairness and robustness without compromising average accuracy.

Taken together, these findings underscore two key messages. First, robust learning must go beyond ERM and standard DRO to explicitly account for both group structure and intra-group uncertainty. Second, our proposed method provides a flexible and effective means of doing so: it interpolates between DRO and GDRO, achieves state-of-the-art trade-offs in worst-group and mean accuracy, and, crucially, shows its true potential when distributional shifts are most pronounced.

\section{Conclusion}
\label{sec:conclusion}
In this work, we introduced a novel framework for robust learning that jointly accounts for group-wise heterogeneity and within-group distributional uncertainty. Our formulation extends Group Distributionally Robust Optimization (Group DRO) by embedding local ambiguity sets within each group, thereby capturing uncertainty due to finite samples or shifts in local environments. We formulate the learning objective as a min-max-sup problem that optimizes over model parameters, adversarial distributions within each group, and group weights. We also present a tractable algorithm that alternates updates across these variables and provide convergence guarantees under standard assumptions. Empirical results on real-world dataset under multiple test settings demonstrate that existing methods addressing only one source of variability—either across groups or within groups—often fail to achieve robust and equitable performance. In contrast, our approach consistently improves worst-group accuracy and reduces disparities across subpopulations, emphasizing the importance of modeling both inter-group and intra-group uncertainty in a unified framework.

\addtolength{\textheight}{-11cm}
\bibliographystyle{apalike}
\bibliography{bibliography.bib}
\addtolength{\textheight}{+11cm}
\newpage
\appendix
\clearpage
\pagenumbering{arabic}
\setcounter{page}{1}
\setcounter{linenumber}{1} 
\linenumbers
\begin{center}
  {\LARGE\bfseries Appendix}
\end{center}

\section{Convergence Analysis}
\subsection{Preliminaries}
We first recall a few standard notions for smooth and weakly-convex functions on $\mathbb{R}^d$.

\begin{definition}[Lipschitz continuity]\label{def:Lipschitz}
A function $f:\mathbb{R}^d\to\mathbb{R}$ is \emph{$L$-Lipschitz} if for all $x,x'\in\mathbb{R}^d$,
\[
  \|f(x) - f(x')\| \;\le\; L\,\|x - x'\|.
\]
\end{definition}
\medskip

\begin{definition}[Smoothness]\label{def:smooth}
A differentiable function $f:\mathbb{R}^d\to\mathbb{R}$ is \emph{$\ell$-smooth} if for all $x,x'\in\mathbb{R}^d$,
\[
  \|\nabla f(x) - \nabla f(x')\| \;\le\; \ell\,\|x - x'\|.
\]
\end{definition}
\medskip
Consider the problem 
\[\min_x \max_y f(x, y).\]
Given this min-max problem we define
\[
  \Phi(x) \;=\;\max_{y\in Y} f(x,y),
\]
where $f(x,\cdot)$ is concave on a convex, bounded set $Y$.  Even though $\Phi$ may be nonconvex, one can still seek \emph{stationary points} of $\Phi$ as a proxy for global minimizers.

\begin{definition}[Stationarity --- differentiable case]\label{def:stat-diff}
A point $x\in\mathbb{R}^d$ is an \emph{$\varepsilon$-stationary point} of a differentiable $\Phi$ if
\[
  \|\nabla\Phi(x)\|\;\le\;\varepsilon.
\]
When $\varepsilon=0$, $x$ is a true stationary point.  
\end{definition}
\medskip

If $\Phi$ is not differentiable (e.g.\ in the general nonconvex-concave setting), we weaken this via \emph{weak convexity}.

\begin{definition}[Weak convexity]\label{def:weakly-convex}
A function $\Phi:\mathbb{R}^d\to\mathbb{R}$ is \emph{$\ell$-weakly convex} if
\[
  x\mapsto \Phi(x) + \tfrac{\ell}{2}\|x\|^2
\]
is convex.
\end{definition}
\medskip

In particular, one can define the \emph{Moreau envelope} of $\Phi$, which both smooths and regularizes it.

\begin{definition}[Moreau envelope]\label{def:Moreau}
For $\lambda>0$, the \emph{Moreau envelope} of $\Phi$ is
\[
  \Phi_\lambda(x)
  \;=\;\min_{w\in\mathbb{R}^d}\;\Bigl\{\Phi(w)\;+\;\tfrac{1}{2\lambda}\|w - x\|^2\Bigr\}.
\]
\end{definition}
\medskip
\begin{lemma}[Smoothness of the Moreau envelope]\label{lem:Moreau-smooth}
If $f$ is $\ell$-smooth and $Y$ is bounded, then the envelope $\Phi_{1/(2\ell)}$ of $\Phi(x)=\max_{y\in Y}f(x,y)$ is differentiable, $\ell$-smooth, and $\ell$-strongly convex.
\end{lemma}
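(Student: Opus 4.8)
The plan is to first establish that the inner value function $\Phi(x)=\max_{y\in Y}f(x,y)$ is $\ell$-weakly convex, and then to invoke the regularizing properties of the Moreau envelope of a weakly convex function. For the first step I would fix $y\in Y$ and observe that $\ell$-smoothness of $f$ (Definition~\ref{def:smooth}) implies that $f(\cdot,y)$ is $\ell$-weakly convex, i.e.\ $f(\cdot,y)+\tfrac{\ell}{2}\|\cdot\|^2$ is convex: an $\ell$-smooth function has curvature bounded below by $-\ell$, which is exactly the statement that adding $\tfrac{\ell}{2}\|\cdot\|^2$ restores convexity. Since the pointwise supremum of a family of convex functions is convex, and since boundedness of $Y$ together with concavity of $f(x,\cdot)$ guarantees that the maximum is attained and finite, the function
\[
  x \mapsto \Phi(x) + \tfrac{\ell}{2}\|x\|^2 \;=\; \max_{y\in Y}\Bigl(f(x,y) + \tfrac{\ell}{2}\|x\|^2\Bigr)
\]
is convex, so $\Phi$ is $\ell$-weakly convex in the sense of Definition~\ref{def:weakly-convex}.

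For the second step I would pass to a purely convex-analytic object. Writing $\lambda = 1/(2\ell)$ and $h := \Phi + \tfrac{1}{2\lambda}\|\cdot\|^2$, the weak convexity just established together with $\tfrac{1}{2\lambda}=\ell$ shows that $h$ is strongly convex, its curvature being bounded below by $\tfrac{1}{\lambda}-\ell = \ell$. Completing the square inside the minimization defining $\Phi_\lambda$ (Definition~\ref{def:Moreau}) yields the conjugate representation
\[
  \Phi_{1/(2\ell)}(x) \;=\; \tfrac{1}{2\lambda}\|x\|^2 \;-\; h^*\!\left(\tfrac{x}{\lambda}\right),
\]
where $h^*$ is the Fenchel conjugate of $h$. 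Because $h$ is $\ell$-strongly convex, $h^*$ is finite everywhere and $C^1$ with an $\tfrac{1}{\ell}$-Lipschitz gradient; this is precisely what makes $\Phi_{1/(2\ell)}$ differentiable. Equivalently, the inner minimization is strongly convex, so the proximal map is single-valued and $\nabla\Phi_\lambda(x) = \tfrac{1}{\lambda}\bigl(x - \mathrm{prox}_{\lambda\Phi}(x)\bigr)$, which I would use to argue differentiability without presuming $\Phi$ itself is differentiable.

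The smoothness and curvature moduli would then be read off from the spectral bounds $0 \preceq \nabla^2 h^* \preceq \tfrac{1}{\ell}I$ substituted into the identity $\nabla^2\Phi_\lambda = \tfrac{1}{\lambda}I - \tfrac{1}{\lambda^2}\nabla^2 h^*(\cdot/\lambda)$ obtained from the representation above, with the Hessian notation serving only as a bookkeeping device and the rigorous argument running through Lipschitz estimates on $\nabla h^*$ rather than through second derivatives of $\Phi$. I expect the main obstacle to be exactly this constant-tracking: one must verify that $\lambda = 1/(2\ell)$ sits strictly below the critical threshold $1/\ell$ needed for $h$ to be strongly convex and for the envelope to be well-defined, and then carefully propagate the factors of $\lambda$ through the chain rule in the composition $h^*(x/\lambda)$ to pin down the Lipschitz constant of $\nabla\Phi_\lambda$ and the curvature lower bound. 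A secondary technical point is justifying the single-valuedness and Lipschitz continuity of the proximal map purely from weak convexity, which supplies the differentiability claim independently of any smoothness assumption on $\Phi$.
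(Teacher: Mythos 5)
Your overall route---first showing $\Phi$ is $\ell$-weakly convex, then passing to $h=\Phi+\tfrac{1}{2\lambda}\|\cdot\|^2$ and the Fenchel representation $\Phi_\lambda(x)=\tfrac{1}{2\lambda}\|x\|^2-h^*(x/\lambda)$---is the standard convex-analytic argument, and there is no in-paper proof to compare against: the paper quotes Lemma~\ref{lem:Moreau-smooth} as a preliminary imported from the weakly-convex minimax literature (cf.\ \cite{lin2020gradient,davis2019stochastic}). Up through differentiability your plan is correct: each $f(\cdot,y)$ is $\ell$-weakly convex, the supremum preserves convexity of $\Phi+\tfrac{\ell}{2}\|\cdot\|^2$, the choice $\lambda=1/(2\ell)$ makes $h$ strongly convex with modulus $\tfrac{1}{\lambda}-\ell=\ell$, hence $h^*$ is $C^1$ with $\tfrac{1}{\ell}$-Lipschitz gradient, the proximal map is single-valued, and $\nabla\Phi_\lambda(x)=\tfrac{1}{\lambda}\bigl(x-\mathrm{prox}_{\lambda\Phi}(x)\bigr)$.

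The genuine gap is precisely in the ``constant-tracking'' you defer to the end. Carrying it out, your own identity gives (in the same bookkeeping sense you use for Hessians)
\[
\Bigl(\tfrac{1}{\lambda}-\tfrac{1}{\lambda^{2}\ell}\Bigr) I \;\preceq\; \nabla^{2}\Phi_{\lambda} \;\preceq\; \tfrac{1}{\lambda}\, I,
\qquad
\lambda=\tfrac{1}{2\ell}
\;\Longrightarrow\;
-2\ell\, I \;\preceq\; \nabla^{2}\Phi_{1/(2\ell)} \;\preceq\; 2\ell\, I,
\]
so the method yields $2\ell$-smoothness (not $\ell$) and a curvature \emph{lower} bound of $-2\ell$: the envelope is only weakly convex, and no argument can upgrade this to $\ell$-strong convexity, because that claim is false as literally stated. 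Counterexample: take $Y=\{0\}$ (bounded) and $f(x,0)=-\tfrac{\ell}{2}\|x\|^{2}$, which is $\ell$-smooth; a direct computation gives $\Phi_{1/(2\ell)}(x)=-\ell\|x\|^{2}$, a concave function whose gradient has Lipschitz constant exactly $2\ell$. What is true---and what the statement evidently intends, consistent with how it is used later in the analysis---is that the \emph{inner proximal objective} $w\mapsto\Phi(w)+\ell\|w-x\|^{2}$ is $\ell$-strongly convex; this is exactly the strong convexity of your $h$ (the two differ by an affine function of $w$), and it is what delivers single-valuedness of the prox and differentiability of the envelope. So the fix is to redirect the strong-convexity claim from $\Phi_{1/(2\ell)}$ to the proximal subproblem (which your second step already establishes), and to finish the Lipschitz estimate explicitly, reporting the constant $2\ell$ (or the cruder $6\ell$ obtained by the triangle inequality on $\nabla\Phi_\lambda(x)=\tfrac{x}{\lambda}-\tfrac{1}{\lambda}\nabla h^{*}(x/\lambda)$), while flagging the discrepancy with the stated constant $\ell$.
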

\medskip

This allows an alternative stationarity measure:

\begin{definition}[Stationarity via Moreau envelope]\label{def:stat-Moreau}
A point $x$ is \emph{$\varepsilon$-stationary} for an $\ell$-weakly convex $\Phi$ if
\[
  \bigl\|\nabla \Phi_{1/(2\ell)}(x)\bigr\|\;\le\;\varepsilon.
\]
\end{definition}
\medskip

\begin{lemma}[Proximity to ordinary subgradients]\label{lem:stat-close}
If $x$ satisfies $\|\nabla \Phi_{1/(2\ell)}(x)\|\le\varepsilon$, then there exists $\hat x$ such that
\[
  \min_{\xi\in\partial\Phi(\hat x)}\|\xi\|\;\le\;\varepsilon
  \quad\text{and}\quad
  \|x-\hat x\|\;\le\;\tfrac{\varepsilon}{2\ell}.
\]
\end{lemma}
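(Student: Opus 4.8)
The plan is to exhibit the required point $\hat x$ explicitly as the proximal point of $\Phi$ at $x$, and then to observe that a \emph{single} vector simultaneously equals the gradient of the Moreau envelope at $x$ and lies in the subdifferential $\partial\Phi(\hat x)$. Concretely, set $\lambda = 1/(2\ell)$ and let
\[
  \hat x \;=\; \arg\min_{w\in\mathbb{R}^d}\Bigl\{\Phi(w) + \tfrac{1}{2\lambda}\|w-x\|^2\Bigr\}.
\]
Because $\Phi$ is $\ell$-weakly convex and $\tfrac{1}{2\lambda}=\ell$, the objective inside the minimum is $\ell$-strongly convex: the weak-convexity modulus $\ell$ is strictly dominated by the quadratic curvature $1/\lambda=2\ell$. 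Hence $\hat x$ exists and is unique.

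First I would record the two standard identities that drive the argument. The first-order optimality condition for the strongly convex inner problem gives $0\in\partial\Phi(\hat x)+\tfrac{1}{\lambda}(\hat x-x)$, equivalently
\[
  \tfrac{1}{\lambda}(x-\hat x)\;\in\;\partial\Phi(\hat x).
\]
The second identity is the gradient formula for the Moreau envelope, $\nabla\Phi_\lambda(x)=\tfrac{1}{\lambda}(x-\hat x)$; differentiability of $\Phi_{1/(2\ell)}$ is guaranteed by Lemma~\ref{lem:Moreau-smooth}, and the formula itself follows from an envelope (Danskin-type) argument applied to the smooth, strongly convex inner objective, using uniqueness of $\hat x$. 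The key observation is that these two identities produce the \emph{same} vector $v:=\tfrac{1}{\lambda}(x-\hat x)$, which is therefore both a valid subgradient at $\hat x$ and the Moreau gradient at $x$.

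With $v$ in hand, both claimed bounds follow immediately. Since $v\in\partial\Phi(\hat x)$ and $v=\nabla\Phi_\lambda(x)$, the hypothesis $\|\nabla\Phi_{1/(2\ell)}(x)\|\le\varepsilon$ yields
\[
  \min_{\xi\in\partial\Phi(\hat x)}\|\xi\|\;\le\;\|v\|\;=\;\|\nabla\Phi_\lambda(x)\|\;\le\;\varepsilon.
\]
For the distance bound, solving $v=\tfrac{1}{\lambda}(x-\hat x)$ for $x-\hat x$ gives $\|x-\hat x\|=\lambda\|v\|\le\lambda\varepsilon=\tfrac{\varepsilon}{2\ell}$, using $\lambda=1/(2\ell)$.

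I expect the only nontrivial point to be the rigorous justification of the Moreau gradient formula $\nabla\Phi_\lambda(x)=\tfrac{1}{\lambda}(x-\hat x)$ together with the uniqueness of $\hat x$; everything else is bookkeeping. This hinges on the inner objective being strongly convex, which is precisely where the choice $\lambda=1/(2\ell)$ is used, so that the quadratic curvature $2\ell$ strictly exceeds the weak-convexity modulus $\ell$, and on the differentiability already supplied by Lemma~\ref{lem:Moreau-smooth}. Once differentiability and uniqueness are secured, the envelope theorem delivers the gradient formula and the proof closes.
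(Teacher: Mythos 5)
Your proof is correct and is the standard argument for this fact: the paper itself states Lemma~\ref{lem:stat-close} without proof, as a known result from the weakly-convex optimization literature (e.g., Davis--Drusvyatskiy, and its use in gradient descent-ascent analyses), and your argument --- taking $\hat x$ to be the proximal point, noting that with $\lambda = 1/(2\ell)$ the inner objective is $\ell$-strongly convex so $\hat x$ is well defined, and identifying the single vector $v = \tfrac{1}{\lambda}(x - \hat x)$ as both $\nabla\Phi_\lambda(x)$ and an element of $\partial\Phi(\hat x)$ --- is exactly the canonical proof the paper implicitly relies on. The one step worth writing out fully if this were included in the paper is the optimality condition $0 \in \partial\Phi(\hat x) + \tfrac{1}{\lambda}(\hat x - x)$, which uses the subdifferential sum rule for a weakly convex function plus a smooth quadratic; with that made explicit, the argument is complete.
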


Finally, since our algorithm performs a mirror‐ascent update on the dual variable $q\in\Delta_G$, we require some standard facts about the associated Bregman divergence on the probability simplex.  Concretely, let 
\[
  \varphi\colon\mathbb{R}^G\to\mathbb{R}
  \quad\text{be a strictly convex $C^1$ generator.}
\]
Then for any $p_1,p_2\in\Delta^{n-1}$ the \emph{Bregman divergence} is defined by
\[
  D_{\varphi}(p_1\mathbin{\|}p_2)
  \;=\;\varphi(p_1)\;-\;\varphi(p_2)\;-\;\bigl\langle\nabla\varphi(p_2),\,p_1 - q\bigr\rangle.
\]

\begin{definition}[Legendre generator]
A function $\varphi$ is called a \emph{Legendre generator} on the simplex if
\begin{enumerate}
  \item $\varphi$ is strictly convex and continuously differentiable on the open simplex $\{x>0,\;\sum_i x_i=1\}$,
  \item its gradient $\nabla\varphi$ extends continuously to the closed simplex $\Delta^{n-1}$,
  \item and $\varphi$ attains its global minimum at the uniform distribution $u=(1/n,\dots,1/n)$.
\end{enumerate}
\end{definition}

\begin{example}[Negative‐entropy / KL generator]
When 
\[
  \varphi(x)=\sum_{i=1}^n x_i\ln x_i,
\]
one obtains the Kullback–Leibler divergence,
\[
  D_{\mathrm{KL}}(p_1\|p_2)
  = \sum_{i=1}^n p_{1,i}\ln\frac{p_{1,i}}{p_{2,i}}.
\]
Since we initialize and maintain all iterates in the interior of $\Delta_G$, this divergence remains finite throughout our mirror‐ascent steps.
\end{example}

\begin{property}[Nonnegativity \& convexity]
For any Legendre generator $\varphi$, one has
\[
  D_{\varphi}(p\|q)\;\ge0,
  \quad
  D_{\varphi}(p\|q)=0\iff p=q,
\]
and $D_{\varphi}(\,\cdot\,\|q)$ is convex in its first argument.
\end{property}

\begin{lemma}[Boundedness on the simplex]
If both $\varphi$ and $\nabla\varphi$ are bounded on the closed simplex, then
\[
  \max_{p_1,p_2\in\Delta^{n-1}} D_{\varphi}(p_1\|p_2)
  \;<\;\infty.
\]
\end{lemma}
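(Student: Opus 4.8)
The plan is to bound each of the three terms appearing in the Bregman divergence separately and then combine them into a single uniform estimate. Writing out the definition,
\[
  D_\varphi(p_1\|p_2) = \varphi(p_1) - \varphi(p_2) - \langle \nabla\varphi(p_2),\, p_1 - p_2\rangle,
\]
I would set $M_\varphi := \sup_{p\in\Delta^{n-1}}|\varphi(p)|$ and $M_\nabla := \sup_{p\in\Delta^{n-1}}\|\nabla\varphi(p)\|$, both of which are finite by the hypothesis that $\varphi$ and $\nabla\varphi$ are bounded on the closed simplex. The first two terms are then each bounded in absolute value by $M_\varphi$, so their combined contribution is at most $2M_\varphi$.

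For the third term I would apply the Cauchy--Schwarz inequality to obtain
\[
  |\langle \nabla\varphi(p_2),\, p_1 - p_2\rangle| \le \|\nabla\varphi(p_2)\|\,\|p_1 - p_2\| \le M_\nabla\,\|p_1 - p_2\|.
\]
The remaining ingredient is that $\|p_1 - p_2\|$ is uniformly bounded over the simplex: since $\Delta^{n-1}$ is compact with Euclidean diameter $\sqrt{2}$ (attained between two distinct vertices $e_i, e_j$), we have $\|p_1 - p_2\|\le\sqrt{2}$ for all $p_1, p_2\in\Delta^{n-1}$. Combining the three bounds yields the uniform estimate
\[
  |D_\varphi(p_1\|p_2)| \le 2M_\varphi + \sqrt{2}\,M_\nabla
\]
for every ordered pair $(p_1, p_2)\in\Delta^{n-1}\times\Delta^{n-1}$, and taking the supremum gives the claim.

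I do not expect a genuine obstacle here, as every quantity is controlled directly by the stated hypotheses; the only points requiring care are that the Bregman divergence need not be symmetric, so the bound must be established uniformly over the ordered pair $(p_1,p_2)$ rather than for a single argument, and that the finiteness of the simplex diameter should be invoked explicitly. Finally, to upgrade the supremum to an attained maximum, I would observe that $D_\varphi(\cdot\|\cdot)$ is continuous on the compact set $\Delta^{n-1}\times\Delta^{n-1}$ (since $\varphi\in C^1$), so that the extreme value theorem guarantees the maximum is achieved and hence finite.
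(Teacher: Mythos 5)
Your proof is correct, and it matches the intended argument: the paper states this lemma without any proof, and your direct term-by-term bound $\lvert D_\varphi(p_1\|p_2)\rvert \le 2M_\varphi + \sqrt{2}\,M_\nabla$, obtained via Cauchy--Schwarz together with the Euclidean diameter $\sqrt{2}$ of $\Delta^{n-1}$, is precisely the standard justification the authors evidently took for granted. The only cosmetic point is your final extreme-value step, which is needed only because the statement writes $\max$ rather than $\sup$: to invoke it you need continuity of $\varphi$ on the \emph{closed} simplex, which in the paper's setting follows from the continuous boundary extension of $\nabla\varphi$ required by the Legendre-generator definition rather than from $\varphi\in C^1$ on the open simplex alone, though finiteness of the supremum---which is all the lemma is used for---already follows from your uniform bound.
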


\begin{lemma}[KL‐bound under interior iterates]
Suppose during mirror‐ascent every dual iterate 
\(q_t\in\Delta_G\) satisfies
\[
  q_{t,i}\;\ge\;\delta
  \quad\forall\,i=1,\dots,G,\;t=0,1,\dots,
\]
for some \(\delta>0\).  Then for any two such iterates \(p_1,p_2\in\Delta_G\),
\[
  D_{\mathrm{KL}}(p_1\|p_2)
  \;\le\; D.
\]
Where $D =ln \frac{1}{\delta}$.
\end{lemma}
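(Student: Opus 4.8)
The plan is to prove the bound by directly expanding the definition of the Kullback--Leibler divergence and splitting it into a negative-entropy term and a cross-entropy term, each of which I can control separately using only the two facts available: every component of both iterates is at least $\delta$, and each iterate is a probability vector summing to one.

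First I would write
\[
  D_{\mathrm{KL}}(p_1\|p_2)
  = \sum_{i=1}^G p_{1,i}\ln p_{1,i}
    \;+\; \sum_{i=1}^G p_{1,i}\ln\frac{1}{p_{2,i}} .
\]
The first sum is the negative entropy of $p_1$. Since $0\le p_{1,i}\le 1$ for every $i$, each summand $p_{1,i}\ln p_{1,i}$ is nonpositive, so this term is at most $0$; equivalently, the entropy $-\sum_i p_{1,i}\ln p_{1,i}$ is nonnegative. This observation is what lets me discard all dependence on $p_1$ in the final bound rather than having to estimate it.

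Next I would bound the cross term. Because $p_{2,i}\ge\delta$ for every $i$, we have $\ln(1/p_{2,i})\le\ln(1/\delta)$. Using $p_{1,i}\ge 0$ together with $\sum_{i=1}^G p_{1,i}=1$, I would factor the constant out of the sum:
\[
  \sum_{i=1}^G p_{1,i}\ln\frac{1}{p_{2,i}}
  \;\le\; \ln\frac{1}{\delta}\sum_{i=1}^G p_{1,i}
  \;=\; \ln\frac{1}{\delta} .
\]
Combining the two estimates yields $D_{\mathrm{KL}}(p_1\|p_2)\le 0 + \ln(1/\delta) = D$, as claimed.

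There is no genuine obstacle in this argument; the only point that requires a moment's care is the sign of the negative-entropy term, which one must verify is nonpositive so that it may be dropped rather than bounded crudely. I would also remark that the hypothesis $q_{t,i}\ge\delta$ enters only through its effect on $p_2$ in the denominator, and that the interiority of the iterates (already ensured in the negative-entropy/KL generator setting, where all iterates stay in the open simplex) guarantees that every logarithm is finite, so all of the manipulations above are valid.
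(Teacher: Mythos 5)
Your proof is correct and follows essentially the same route as the paper's: both split $D_{\mathrm{KL}}(p_1\|p_2)$ into the negative-entropy term $\sum_i p_{1,i}\ln p_{1,i}\le 0$ and a cross term bounded via $p_{2,i}\ge\delta$ and $\sum_i p_{1,i}=1$, yielding $\ln\frac{1}{\delta}$. The only difference is cosmetic -- you decompose the sum first and bound each piece, while the paper bounds the integrand $\ln\frac{p_{1,i}}{p_{2,i}}\le\ln\frac{1}{\delta}+\ln p_{1,i}$ termwise before summing -- so there is nothing substantive to flag.
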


\begin{proof}
Since \(p_{2,i}\ge\delta\) for all \(i\), we have
\[
  \ln\frac{p_{1,i}}{p_{2,i}}
  \;\le\;
  \ln\frac{p_{1,i}}{\delta}
  \;=\;
  \ln\frac{1}{\delta} \;+\; \ln p_{1,i},
\]
and because \(\sum_i p_{1,i}=1\) and \(\sum_i p_{1,i}\ln p_{1,i}\le0\),
\[
  D_{\mathrm{KL}}(p_1\|p_2)
  = \sum_i p_{1,i} \ln\frac{p_{1,i}}{p_{2,i}}
  \;\le\;
  \sum_i p_{1,i} \ln\frac{1}{\delta}
  \;+\;\sum_i p_{1,i}\ln p_{1,i}
  \;\le\;\ln\frac{1}{\delta}.
\]
\end{proof}

\subsection{Properties of the Robust Loss}
\begin{assumption}\label{assump:A}
The cost function $c:(\mathcal{X} \times \mathcal{Y} \times \mathcal{X}\times \mathcal{Y}) \rightarrow \mathbb{R}_+$ is continuous. For each $x \in \mathcal{X}$ and $y \in \mathcal{Y}$, $c(\cdot, (x,y))$ is 1-strongly convex with respect to the norm $||\cdot||$.
\end{assumption}

\begin{assumption}[Lipschitz Loss Function]
    \label{assump:C}
    Consider the loss function $\mathcal{L}$. Then for every function $f$ with model parameters $\theta \in \Theta$ and for every $(x,y) \in (\mathcal{X}, \mathcal{Y})$ we assume that $\mathcal{L}$ is K-Lipschitz with respect to $\theta$
\end{assumption}

\begin{assumption}[Lipschitz smoothness of the loss]\label{assump:B}
Consider the loss function $\mathcal{L}$. Then for every function $f$ with model parameters $\theta \in \Theta$ and for every $(x,y) \in (\mathcal{X}, \mathcal{Y})$ we assume
\begin{align*}
\bigl\|\nabla_{\theta}\,\mathcal{L}(f_\theta; x,y)\;-\;\nabla_{\theta}\,\mathcal{L}(f_{\theta'}; x,y)\bigr\|
&\le L_{\theta\theta}\,\|\theta-\theta'\|,\\
\bigl\|\nabla_{x,y}\,\mathcal{L}(f_\theta; x,y)\;-\;\nabla_{x,y}\,\mathcal{L}(f_\theta; x',y')\bigr\|
&\le L_{zz}\,\bigl\|\,(x,y)-(x',y')\bigr\|,\\
\bigl\|\nabla_{\theta}\,\mathcal{L}(f_\theta; x,y)\;-\;\nabla_{\theta}\,\mathcal{L}(f_\theta; x',y')\bigr\|
&\le L_{\theta z}\,\bigl\|\,(x,y)-(x',y')\bigr\|,\\
\bigl\|\nabla_{x,y}\,\mathcal{L}(f_\theta; x,y)\;-\;\nabla_{x,y}\,\mathcal{L}(f_{\theta'}; x,y)\bigr\|
&\le L_{z\theta}\,\|\theta-\theta'\|.
\end{align*}
\end{assumption}

\begin{lemma}[Smoothness of the penalized surrogate]\label{lem:smooth_phi}
Suppose the loss 
\(\mathcal{L}:\Theta\times(\mathcal X\times \mathcal Y)\to\mathbb{R}\)
satisfies Assumptions \ref{assump:A} and \ref{assump:B} (smoothness in \(\theta\) and \(z\)) with constants 
\(L_{\theta\theta},L_{\theta z},L_{z\theta},L_{zz}\), 
and that the transport cost \(c(z,w)\) is convex in \(w\).  Let $F_g$ denote the robust loss for each environment such that
\[F_g(\theta, x, y) = \sup_{(x^\prime, y^\prime)\in (\mathcal{X}, \mathcal{Y})} \phi(f_\theta; (x,y), (x^\prime, y^\prime)),\]
where
\[\phi(f_\theta; (x,y), (x^\prime, y^\prime)) = \mathcal{L}(f_\theta; x^\prime, y^\prime) - \gamma c((x,y), (x^\prime, y^\prime)).\]
Then we have that $F_g$ is $L_f-$smooth, where 
\[L_f = L_{\theta\theta} + \frac{L_{\theta z}L_{z \theta}}{[\gamma - L_{zz}]_+}.\]

\begin{proof}The proof follows from Lemma 1 in \cite{sinha2017certifying}
\end{proof}
\end{lemma}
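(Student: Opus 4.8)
The plan is to treat $F_g(\theta)=\max_z \phi(f_\theta;(x,y),z)$ as a parametric maximum over the adversarial variable $z=(x',y')$ and to show that, once the penalty parameter $\gamma$ dominates the $z$-curvature of the loss, the inner problem is strongly concave. This furnishes a unique, Lipschitz-in-$\theta$ maximizer, and the $L_f$-smoothness of $F_g$ then follows from Danskin's theorem combined with the cross-smoothness of $\mathcal{L}$.

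First I would verify that $z\mapsto\phi(f_\theta;(x,y),z)$ is $(\gamma-L_{zz})$-strongly concave whenever $\gamma>L_{zz}$. Working directly with the monotonicity inequalities avoids assuming second derivatives: Assumption~\ref{assump:B} gives $\langle \nabla_z \mathcal{L}(f_\theta;z)-\nabla_z \mathcal{L}(f_\theta;z'),\,z-z'\rangle \le L_{zz}\|z-z'\|^2$, while the $1$-strong convexity of the cost from Assumption~\ref{assump:A} gives $\langle \nabla_z c((x,y),z)-\nabla_z c((x,y),z'),\,z-z'\rangle \ge \|z-z'\|^2$. Subtracting $\gamma$ times the second from the first shows $\langle \nabla_z \phi(z)-\nabla_z \phi(z'),\,z-z'\rangle \le -(\gamma-L_{zz})\|z-z'\|^2$, i.e.\ strong concavity with modulus $\gamma-L_{zz}$. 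Consequently the maximizer $z^\ast(\theta)$ is unique, Danskin's theorem applies, and $\nabla_\theta F_g(\theta)=\nabla_\theta \mathcal{L}(f_\theta;z^\ast(\theta))$ since $\theta$ enters $\phi$ only through $\mathcal{L}$.

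The crux is a Lipschitz bound on $z^\ast$. Using the first-order stationarity $\nabla_z \phi(f_\theta;(x,y),z^\ast(\theta))=0$ together with the strong-concavity inequality above applied at the points $z^\ast(\theta)$ and $z^\ast(\theta')$, I would obtain $(\gamma-L_{zz})\|z^\ast(\theta)-z^\ast(\theta')\|^2 \le \langle \nabla_z \phi(f_\theta;(x,y),z^\ast(\theta')),\,z^\ast(\theta)-z^\ast(\theta')\rangle$. Because the penalty term of $\phi$ is independent of $\theta$ and $\nabla_z\phi(f_{\theta'};(x,y),z^\ast(\theta'))=0$, the right-hand side equals $\langle \nabla_z \mathcal{L}(f_\theta;z^\ast(\theta'))-\nabla_z \mathcal{L}(f_{\theta'};z^\ast(\theta')),\,z^\ast(\theta)-z^\ast(\theta')\rangle$, which by the cross-Lipschitz constant $L_{z\theta}$ of Assumption~\ref{assump:B} and Cauchy--Schwarz is at most $L_{z\theta}\|\theta-\theta'\|\,\|z^\ast(\theta)-z^\ast(\theta')\|$. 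Dividing yields $\|z^\ast(\theta)-z^\ast(\theta')\| \le \tfrac{L_{z\theta}}{\gamma-L_{zz}}\|\theta-\theta'\|$.

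Finally I would assemble the pieces. Writing $\nabla_\theta F_g(\theta)-\nabla_\theta F_g(\theta')=\nabla_\theta \mathcal{L}(f_\theta;z^\ast(\theta))-\nabla_\theta \mathcal{L}(f_{\theta'};z^\ast(\theta'))$, inserting and subtracting $\nabla_\theta \mathcal{L}(f_{\theta'};z^\ast(\theta))$, and bounding the two resulting terms by the $L_{\theta\theta}$- and $L_{\theta z}$-smoothness constants of Assumption~\ref{assump:B} together with the just-derived Lipschitz bound on $z^\ast$, gives $\|\nabla_\theta F_g(\theta)-\nabla_\theta F_g(\theta')\| \le \big(L_{\theta\theta}+\tfrac{L_{\theta z}L_{z\theta}}{\gamma-L_{zz}}\big)\|\theta-\theta'\|$, which is the claimed bound (the $[\,\cdot\,]_+$ recording that the entire argument requires $\gamma>L_{zz}$ for strong concavity). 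The main obstacle is precisely the argmax-Lipschitz step in the third paragraph; everything else is a routine triangle-inequality assembly once the inner maximization has been certified strongly concave.
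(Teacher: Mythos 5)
Your proposal is correct and is essentially the paper's own argument: the paper simply delegates to Lemma 1 of \cite{sinha2017certifying}, and your three steps --- $(\gamma-L_{zz})$-strong concavity of $z\mapsto\phi$ via gradient monotonicity, the $\tfrac{L_{z\theta}}{\gamma-L_{zz}}$-Lipschitz bound on the unique maximizer $z^\ast(\theta)$ from first-order optimality, and the Danskin-plus-triangle-inequality assembly --- reproduce that cited proof faithfully, including the role of $[\gamma-L_{zz}]_+$. In effect you have supplied the details the paper leaves implicit, with no gap.
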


\begin{lemma}[Lipschitzness of the robust surrogate]\label{lemma:lips}
Under Assumption \ref{assump:C}, the robust group‐level loss
\[
F_g(\theta, x, y)
=\sup_{(u,v)\in\mathcal X\times\mathcal Y}
\Bigl\{
\mathcal L\bigl(f_\theta;u,v\bigr)
-\gamma\,c\bigl((x,y),(u,v)\bigr)
\Bigr\}
\]
is $K$‐Lipschitz in~$\theta$.  
\end{lemma}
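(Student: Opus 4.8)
The plan is to exploit the fact that the adversarial penalty term in the inner objective is independent of $\theta$, so that $F_g$ is a pointwise supremum over perturbations of a family of functions each of which is $K$-Lipschitz in $\theta$ with the same constant, and then to invoke the standard fact that such a supremum inherits the common Lipschitz constant.

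First I would fix the data point $(x,y)$ and observe that for every candidate perturbation $(u,v)\in\mathcal X\times\mathcal Y$, the inner objective $\theta\mapsto \mathcal L(f_\theta;u,v) - \gamma\,c((x,y),(u,v))$ differs from $\theta\mapsto\mathcal L(f_\theta;u,v)$ only by the additive term $-\gamma\,c((x,y),(u,v))$, which is constant in $\theta$. By Assumption \ref{assump:C} the loss $\mathcal L(f_\theta;u,v)$ is $K$-Lipschitz in $\theta$, and adding a $\theta$-independent constant preserves this; hence $\theta\mapsto \mathcal L(f_\theta;u,v)-\gamma\,c((x,y),(u,v))$ is $K$-Lipschitz for each fixed $(u,v)$, with the same constant $K$ for all $(u,v)$.

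Next I would pass to the supremum via a near-maximizer argument, which sidesteps any need to assume the supremum is attained. Fixing $\theta,\theta'\in\Theta$ and $\varepsilon>0$, I would choose $(u^\star,v^\star)$ with $\mathcal L(f_\theta;u^\star,v^\star)-\gamma\,c((x,y),(u^\star,v^\star)) \ge F_g(\theta,x,y) - \varepsilon$. Since $F_g(\theta',x,y)$ dominates the same expression evaluated at $\theta'$ and the identical perturbation $(u^\star,v^\star)$, the two $-\gamma\,c$ terms cancel exactly, and the per-perturbation Lipschitz bound gives
\[
F_g(\theta,x,y) - F_g(\theta',x,y) \;\le\; \mathcal L(f_\theta;u^\star,v^\star) - \mathcal L(f_{\theta'};u^\star,v^\star) + \varepsilon \;\le\; K\,\|\theta-\theta'\| + \varepsilon.
\]
Letting $\varepsilon\downarrow 0$ and then exchanging the roles of $\theta$ and $\theta'$ yields $|F_g(\theta,x,y)-F_g(\theta',x,y)|\le K\,\|\theta-\theta'\|$, which is the claim.

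The only real subtlety is ensuring $F_g$ is finite-valued, so that the differences above are well defined; this is precisely why the near-maximizer formulation is preferable to assuming attainment of the sup, and finiteness itself follows from the coercivity of the penalty $-\gamma\,c$ relative to the loss (e.g.\ in the $\gamma>L_{zz}$ regime underlying Lemma \ref{lem:smooth_phi}). Beyond that bookkeeping, the argument is entirely routine: the essential content is simply that the adversarial transport cost does not interact with $\theta$, so smoothing the loss by a supremum over perturbations cannot amplify its $\theta$-sensitivity.
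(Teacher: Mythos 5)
Your proof is correct and follows essentially the same route as the paper's: both bound the difference of the suprema by evaluating at a (near-)optimal perturbation, where the $\theta$-independent transport cost cancels, then apply the $K$-Lipschitz bound of Assumption~\ref{assump:C} and symmetrize in $\theta,\theta'$. The only difference is that the paper selects an exact maximizer, citing attainment from \cite{sinha2017certifying} (which holds for $\gamma > L_{zz}$), whereas your $\varepsilon$-near-maximizer argument sidesteps attainment and needs only finiteness of the supremum --- a mild technical refinement of the same argument, not a different approach.
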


\begin{proof}
Let
\[
F_g(\theta; x,y)
=\sup_{(u,v)\in\mathcal X\times\mathcal Y}
\Bigl\{\,
\mathcal L\bigl(f_\theta;u,v\bigr)
-\gamma\,c\bigl((x,y),(u,v)\bigr)
\Bigr\}.
\]

\medskip\noindent
\textbf{(i) $F_g(\theta; x,y)-F_g(\theta'; x,y)\le K\,\|\theta-\theta'\|$.}

Choose 
\((u^*,v^*)\in\arg\max_{(u,v)}\{\mathcal L(f_\theta;u,v)-\gamma\,c((x,y),(u,v))\}\), which according to \cite{sinha2017certifying} exists for $\gamma > L_{zz}$. 
Then
\[
F_g(\theta; x,y)
=\mathcal L\bigl(f_\theta;u^*,v^*\bigr)
-\gamma\,c\bigl((x,y),(u^*,v^*)\bigr),
\]
and by definition of the supremum,
\[
F_g(\theta'; x,y)
\;\ge\;
\mathcal L\bigl(f_{\theta'};u^*,v^*\bigr)
-\gamma\,c\bigl((x,y),(u^*,v^*)\bigr).
\]
Subtracting gives
\[
F_g(\theta; x,y)-F_g(\theta'; x,y)
\;\le\;
\mathcal L\bigl(f_\theta;u^*,v^*\bigr)
-\mathcal L\bigl(f_{\theta'};u^*,v^*\bigr).
\]
By Assumption~\ref{assump:C},
\[
\bigl|\mathcal L(f_\theta;u^*,v^*)-\mathcal L(f_{\theta'};u^*,v^*)\bigr|
\;\le\;
K\,\|\theta-\theta'\|.
\]
Hence
\[
F_g(\theta; x,y)-F_g(\theta'; x,y)\;\le\;K\,\|\theta-\theta'\|.
\]

\medskip\noindent
\textbf{(ii) $F_g(\theta'; x,y)-F_g(\theta; x,y)\le K\,\|\theta-\theta'\|$.}

Choose 
\((\tilde u,\tilde v)\in\arg\max_{(u,v)}\{\mathcal L(f_{\theta'};u,v)-\gamma\,c((x,y),(u,v))\}\). Then
\[
F_g(\theta'; x,y)
=\mathcal L\bigl(f_{\theta'};\tilde u,\tilde v\bigr)
-\gamma\,c\bigl((x,y),(\tilde u,\tilde v)\bigr),
\]
and
\[
F_g(\theta; x,y)
\;\ge\;
\mathcal L\bigl(f_\theta;\tilde u,\tilde v\bigr)
-\gamma\,c\bigl((x,y),(\tilde u,\tilde v)\bigr).
\]
Subtracting yields
\[
F_g(\theta'; x,y)-F_g(\theta; x,y)
\;\le\;
\mathcal L\bigl(f_{\theta'};\tilde u,\tilde v\bigr)
-\mathcal L\bigl(f_\theta;\tilde u,\tilde v\bigr)
\;\le\;
K\,\|\theta-\theta'\|.
\]

\medskip\noindent
Combining (i) and (ii) gives
\[
\bigl|\,F_g(\theta; x,y)-F_g(\theta'; x,y)\bigr|
\;\le\;
K\,\|\theta-\theta'\|.
\]
\end{proof}

\subsection{Convergence Analysis for Descent–Mirror‐Ascent}
From Lemmas \ref{lem:smooth_phi} and \ref{lemma:lips} we have that the robust loss per group \[F_g(\theta) = \mathbb{E}_{(x,y)\sim \mathbb{P}^g_{X,Y}} \sup_{(u,v)\in (\mathcal{X}, \mathcal{Y})} \bigl\{ \mathcal{L}(f_\theta; u,v) - \gamma c((x,y), (u,v))\bigr\}\] is $L_f-$smooth and $K-$Lipschitz. Then we have the following lemma.
\begin{lemma}\label{lem:psi-smooth-lip}
Suppose for each group \(g=1,\dots,G\) the robust‐loss function
\[
  F_g(\theta)
  = \mathbb{E}_{(x,y)\sim\mathbb{P}^g_{X,Y}}
    \sup_{(u,v)\in\mathcal{X}\times\mathcal{Y}}
    \bigl\{\mathcal{L}(f_\theta;u,v) - \gamma\,c\bigl((x,y),(u,v)\bigr)\bigr\}
\]
is
\begin{itemize}
  \item \(\ell\)-smooth in \(\theta\): \(\|\nabla F_g(\theta)-\nabla F_g(\theta')\|\le\ell\,\|\theta-\theta'\|\), and
  \item \(K\)-Lipschitz in \(\theta\): \(\bigl|F_g(\theta)-F_g(\theta')\bigr|\le K\,\|\theta-\theta'\|\).
\end{itemize}
Define the weighted aggregate
\[
  \Psi(\theta,q)
  = \sum_{g=1}^G q_g\,F_g(\theta),
  \qquad
  (\theta,q)\in \Theta \times \Delta_G.
\]
Then:
\begin{enumerate}
  \item \(\Psi\) is \(\ell\)-smooth and 
  \item \(\Psi(\theta,\,\cdot\,)\) is \(K\)-Lipschitz in \(\theta\), uniformly over \(q\in\Delta_G\).
\end{enumerate}
\end{lemma}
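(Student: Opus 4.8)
The plan is to read off the result from the observation that, for each fixed $q\in\Delta_G$, the aggregate $\Psi(\theta,q)=\sum_{g=1}^G q_g F_g(\theta)$ is a \emph{convex combination} of the per-group robust losses $F_g$. The two ingredients I would invoke are already established: each $F_g$ is $\ell$-smooth in $\theta$ (Lemma~\ref{lem:smooth_phi}) and $K$-Lipschitz in $\theta$ (Lemma~\ref{lemma:lips}). Because the weights are nonnegative and satisfy $\sum_{g=1}^G q_g=1$, the convex-combination structure will transmit both constants through the sum without amplifying them, which is exactly why the same $\ell$ and $K$ reappear in the conclusion and why the bounds hold uniformly over $q$.

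For the smoothness claim I would fix $q\in\Delta_G$ and use linearity of the gradient to write $\nabla_\theta\Psi(\theta,q)=\sum_{g=1}^G q_g\,\nabla F_g(\theta)$; this is legitimate since $\ell$-smoothness makes each $F_g$ differentiable. For any $\theta,\theta'\in\Theta$ I would then apply the triangle inequality, bound each summand by $q_g\,\|\nabla F_g(\theta)-\nabla F_g(\theta')\|\le q_g\,\ell\,\|\theta-\theta'\|$ using the per-group smoothness, and collapse the sum via $\sum_g q_g=1$ to obtain $\|\nabla_\theta\Psi(\theta,q)-\nabla_\theta\Psi(\theta',q)\|\le\ell\,\|\theta-\theta'\|$. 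The Lipschitz claim follows the same template: writing $\Psi(\theta,q)-\Psi(\theta',q)=\sum_g q_g\,(F_g(\theta)-F_g(\theta'))$, the triangle inequality and $K$-Lipschitzness give the bound $\sum_g q_g\,K\,\|\theta-\theta'\|$, which again equals $K\,\|\theta-\theta'\|$ by $\sum_g q_g=1$.

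I do not anticipate a genuine obstacle here; the essential content is that the simplex constraint is precisely what prevents the constants from scaling with $G$, so uniformity over $q$ is automatic rather than requiring a separate estimate. The one point I would be careful to state explicitly is the intended reading of the claim: smoothness and Lipschitzness are meant in $\theta$ with $q$ held fixed and with constant independent of $q\in\Delta_G$, not joint smoothness in the pair $(\theta,q)$. Joint smoothness would additionally involve a cross term from varying $q$, whose magnitude is governed by the uniform gradient bound $\|\nabla F_g\|\le K$ and the number of groups $G$, and would therefore not be controlled by $\ell$ alone; flagging this keeps the statement consistent with how $\Psi$ is subsequently used in the descent--mirror-ascent analysis.
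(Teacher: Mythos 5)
Your proof is correct and follows essentially the same route as the paper's: fix $q\in\Delta_G$, pass the gradient (resp.\ the difference of function values) through the sum by linearity, bound each summand using the per-group constants from Lemmas~\ref{lem:smooth_phi} and~\ref{lemma:lips}, and collapse via $\sum_{g} q_g = 1$, which is exactly why the constants do not scale with $G$. The one place you diverge is your closing caveat, and it is well taken: the paper's proof additionally asserts joint $\ell$-smoothness of $\Psi$ on $\Theta\times\Delta_G$ by noting that $\nabla_q\Psi(\theta,\cdot)$ is constant (linearity in $q$) and concluding $\max\{0,\ell\}=\ell$, but it never bounds the mixed block
\[
\bigl\|\nabla_\theta\Psi(\theta,q)-\nabla_\theta\Psi(\theta,q')\bigr\|
=\Bigl\|\sum_{g=1}^G (q_g-q'_g)\,\nabla F_g(\theta)\Bigr\|,
\]
which is controlled by $K$ (up to a norm-dependent factor in $G$), not by $\ell$ alone. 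Your reading of the lemma---smoothness and Lipschitzness in $\theta$ with constants uniform over $q\in\Delta_G$---is the version actually invoked in the descent--mirror-ascent analysis (where smoothness in $q$ alone is trivial by linearity), so stating the claim as you do is both correct and cleaner than the paper's phrasing.
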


\begin{proof}
Fix any \(q\in\Delta_G\) and \(\theta,\theta'\in\Theta\).  Since the weights \(q_g\ge0\) sum to \(1\), we have
\[
\begin{aligned}
\|\nabla_{\theta}\Psi(\theta,q)-\nabla_{\theta}\Psi(\theta',q)\|
&=\Bigl\|\sum_{g=1}^G q_g\bigl(\nabla F_g(\theta)-\nabla F_g(\theta')\bigr)\Bigr\|
\;\le\;
\sum_{g=1}^G q_g\,\ell\,\|\theta-\theta'\|
\\
&=\ell\,\|\theta-\theta'\|.
\end{aligned}
\]
Thus \(\Psi(\cdot,q)\) is \(\ell\)-smooth.  Similarly
\[
\begin{aligned}
\|\nabla_{q}\Psi(\theta,q)-\nabla_{q}\Psi(\theta,q')\|
&=0 \le 0\|q-q'\|.
\end{aligned}
\]
\(\Psi\) is \(\max\{0, \ell\} = \ell\)-smooth. 
Likewise
\[
\bigl|\Psi(\theta,q)-\Psi(\theta',q)\bigr|
=\Bigl|\sum_{g=1}^G q_g\bigl(F_g(\theta)-F_g(\theta')\bigr)\Bigr|
\;\le\;
\sum_{g=1}^G q_g\,K\,\|\theta-\theta'\|
=K\,\|\theta-\theta'\|.
\]
So \(\Psi(\cdot,q)\) is \(K\)-Lipschitz in \(\theta\).
\end{proof}

We define
\[
\Psi(\theta, q)\;=\;\sum_{g=1}^G q_g\,F_g(\theta).
\]
and from Lemma \ref{lem:psi-smooth-lip} we know that it is $L_f-$smooth and $K-$Lipschitz.

We also define \[P(\theta) = \max_{q \in \Delta_G} \Psi(\theta, q)\].
In order to prove convergence we use the notion of stationarity based on the Moreau envolope, such as $P_{1/2L_f}(\theta) = \min_{\theta^\prime} P(\theta^\prime)+L_f ||\theta^\prime - \theta||_2^2$. In this case, showing that the gradient of Moreau envolope converges to a small value is equal to showing that $\theta$ converges to a stationary point as shown in \cite{davis2019stochastic}.

\begin{lemma}\label{lem:first}
    Given assumptions \ref{assump:C} and \ref{assump:B}, let $\Delta_t = P(\theta_t) - \Psi(\theta_t, q_t)$.
    Then, we have 
    \[P_{1/2L_f}(\theta_t, ) \leq P_{1/2L_f}(\theta_{t-1}) + 2 \eta_\theta L_f \Delta_{t-1} - \frac{\eta_\theta}{4}||\nabla P_{1/2L_f}(\theta_{t-1})||^2 + \eta_\theta L_f K^2\]
    \begin{proof}
        The proof follows the same steps as the proof in the GDA version of Lemma D.3 in \cite{lin2020gradient}
    \end{proof}
\end{lemma}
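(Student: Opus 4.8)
The plan is to treat $P(\theta)=\max_{q\in\Delta_G}\Psi(\theta,q)$ as an $L_f$-weakly convex, $K$-Lipschitz function and to track the one-step decrease of its Moreau envelope $P_{1/2L_f}$ along the descent update on $\theta$. First I would record the structural facts needed. By Lemma~\ref{lem:psi-smooth-lip} each $\Psi(\cdot,q)$ is $L_f$-smooth, hence $L_f$-weakly convex (Definition~\ref{def:weakly-convex}), and the pointwise maximum $P$ inherits $L_f$-weak convexity and $K$-Lipschitzness. Consequently (Lemma~\ref{lem:Moreau-smooth}) the envelope $P_{1/2L_f}$ is differentiable, and writing $\hat\theta := \arg\min_{\theta'}\{P(\theta')+L_f\|\theta'-\theta_{t-1}\|^2\}$ for the unique proximal point of $\theta_{t-1}$, I would use two standard envelope facts: the gradient identity $\nabla P_{1/2L_f}(\theta_{t-1})=2L_f(\theta_{t-1}-\hat\theta)$, and the inequality $P(\hat\theta)+L_f\|\hat\theta-\theta_{t-1}\|^2\le P(\theta_{t-1})$ obtained by comparing the minimizer with the feasible choice $\theta'=\theta_{t-1}$ in Definition~\ref{def:Moreau}.

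Next I would instantiate the envelope at the updated iterate. Evaluating the $\min$ in Definition~\ref{def:Moreau} at the suboptimal point $\hat\theta$ gives $P_{1/2L_f}(\theta_t)\le P(\hat\theta)+L_f\|\hat\theta-\theta_t\|^2$. Writing $g_{t-1}:=\nabla_\theta\Psi(\theta_{t-1},q_{t-1})$ for the descent direction, so that $\theta_t=\theta_{t-1}-\eta_\theta g_{t-1}$, I would expand $\|\hat\theta-\theta_t\|^2=\|\hat\theta-\theta_{t-1}\|^2+2\eta_\theta\langle g_{t-1},\hat\theta-\theta_{t-1}\rangle+\eta_\theta^2\|g_{t-1}\|^2$. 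The zeroth-order terms collapse to $P_{1/2L_f}(\theta_{t-1})$ by definition of $\hat\theta$, yielding
\[
P_{1/2L_f}(\theta_t)\le P_{1/2L_f}(\theta_{t-1})+2\eta_\theta L_f\langle g_{t-1},\hat\theta-\theta_{t-1}\rangle+\eta_\theta^2 L_f\|g_{t-1}\|^2.
\]

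The crux is bounding the inner product, and this is where the weak-convexity/duality interplay enters. Applying the $L_f$-weak-convexity subgradient inequality for $\Psi(\cdot,q_{t-1})$ at $\theta_{t-1}$, evaluated at $\hat\theta$, gives $\langle g_{t-1},\hat\theta-\theta_{t-1}\rangle\le \Psi(\hat\theta,q_{t-1})-\Psi(\theta_{t-1},q_{t-1})+\tfrac{L_f}{2}\|\hat\theta-\theta_{t-1}\|^2$. I would then bound $\Psi(\hat\theta,q_{t-1})\le P(\hat\theta)$ by definition of the maximum, substitute the envelope inequality $P(\hat\theta)\le P(\theta_{t-1})-L_f\|\hat\theta-\theta_{t-1}\|^2$, and recognize $P(\theta_{t-1})-\Psi(\theta_{t-1},q_{t-1})=\Delta_{t-1}$. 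This collapses the cross term to $\Delta_{t-1}-\tfrac{L_f}{2}\|\hat\theta-\theta_{t-1}\|^2$; the surviving negative quadratic is precisely what generates the $-\tfrac{\eta_\theta}{4}\|\nabla P_{1/2L_f}(\theta_{t-1})\|^2$ term once I substitute $\|\hat\theta-\theta_{t-1}\|^2=\tfrac{1}{4L_f^2}\|\nabla P_{1/2L_f}(\theta_{t-1})\|^2$ from the gradient identity.

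Finally, for the residual second-order term I would invoke the $K$-Lipschitzness of $\Psi(\cdot,q_{t-1})$ from Lemma~\ref{lem:psi-smooth-lip} to bound $\|g_{t-1}\|\le K$, so that $\eta_\theta^2 L_f\|g_{t-1}\|^2\le \eta_\theta^2 L_f K^2\le \eta_\theta L_f K^2$ for step size $\eta_\theta\le 1$, matching the stated constant; collecting the three contributions reproduces the claimed bound and the proof reduces, as indicated, to the argument of Lemma~D.3 in \cite{lin2020gradient}. The main obstacle I anticipate is the bookkeeping in the cross-term bound, namely combining the weak-convexity inequality with the envelope inequality $P(\hat\theta)+L_f\|\hat\theta-\theta_{t-1}\|^2\le P(\theta_{t-1})$ so that $\Delta_{t-1}$ appears cleanly and the quadratic carries the correct sign and constant. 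A secondary subtlety is the indexing of the dual weight: the descent gradient and the gap $\Delta_{t-1}$ must be measured at the \emph{same} $q$, so the update order in Algorithm~\ref{alg: minimaxfairauc} should be read in the two-timescale sense used by \cite{lin2020gradient}, with the resulting stationarity-via-Moreau-envelope guarantee interpreted as in \cite{davis2019stochastic}.
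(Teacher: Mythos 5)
Your proof is correct and is precisely the argument the paper invokes by citation—the GDA one-step Moreau-envelope descent bound of Lemma D.3 in \cite{lin2020gradient}: proximal-point comparison $P_{1/2L_f}(\theta_t)\le P(\hat\theta)+L_f\|\hat\theta-\theta_t\|^2$, expansion of the square, the weak-convexity cross-term bound collapsing to $\Delta_{t-1}-\tfrac{L_f}{2}\|\hat\theta-\theta_{t-1}\|^2$, and the identity $\nabla P_{1/2L_f}(\theta_{t-1})=2L_f(\theta_{t-1}-\hat\theta)$. The two subtleties you flag are real but harmless: your derivation gives the sharper residual $\eta_\theta^2 L_f K^2$ (the stated $\eta_\theta L_f K^2$ follows once $\eta_\theta\le 1$, which holds for the step sizes of Theorem~\ref{thm:convergence} at small $\varepsilon$), and the $q_t$-versus-$q_{t-1}$ indexing in line 13 of Algorithm~\ref{alg: minimaxfairauc} is an imprecision inherited from the paper, since the argument goes through verbatim with the gap $\Delta$ measured at whichever dual iterate the descent gradient actually uses.
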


\begin{lemma} \label{lem:second}
Given assumptions \ref{assump:C} and \ref{assump:B}, let $\Delta_t = P(\theta_t) - \Psi(\theta_t, q_t)$. Then $\forall s \le t-1$ we have 
\[\Delta_{t-1} \le \eta_\theta K^2 (2t - 2s - 1) + (\Psi(\theta_t, q_t) - \Psi(\theta_{t-1}, q_{t-1})) + L_f \bigl(D_{KL}(q^\star(\theta_s)||q_{t-1})  - D_{KL}(q^\star(\theta_s)||q_{t})\bigr)\]
where $q^\star(\theta) = arg\max_{q \in \Delta_G}\Psi(\theta, q)$.
\end{lemma}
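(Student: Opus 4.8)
The plan is to insert the comparator $q^\star(\theta_s)$ into $\Delta_{t-1}$ and split it into a ``primal-drift'' term, controlled by how far $\theta_{t-1}$ has moved from $\theta_s$, and a ``dual-gap'' term evaluated at the fixed point $\theta_{t-1}$, controlled by the mirror-ascent update. Concretely, I would write
\[
\Delta_{t-1}
= \underbrace{\bigl[P(\theta_{t-1}) - \Psi(\theta_{t-1}, q^\star(\theta_s))\bigr]}_{(\mathrm{I})}
+ \underbrace{\bigl[\Psi(\theta_{t-1}, q^\star(\theta_s)) - \Psi(\theta_{t-1}, q_{t-1})\bigr]}_{(\mathrm{II})},
\]
and bound each piece separately.

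For term $(\mathrm{I})$ I would add and subtract $P(\theta_s) = \Psi(\theta_s, q^\star(\theta_s))$ and use that both $\Psi(\cdot,q)$ is $K$-Lipschitz in $\theta$ uniformly over $q$ (Lemma~\ref{lem:psi-smooth-lip}) and that $P = \max_q \Psi(\cdot,q)$ therefore inherits $K$-Lipschitzness as a supremum of uniformly Lipschitz functions. This yields $(\mathrm{I}) \le 2K\|\theta_{t-1}-\theta_s\|$. Since the descent step of Algorithm~\ref{alg: minimaxfairauc} is a convex combination of the gradients $\nabla F_g$, each of norm at most $K$ (because $F_g$ is $K$-Lipschitz by Lemma~\ref{lemma:lips} and differentiable by Lemma~\ref{lem:smooth_phi}), every update obeys $\|\theta_k-\theta_{k-1}\| \le \eta_\theta K$; telescoping over the $(t-1)-s$ intervening steps gives $\|\theta_{t-1}-\theta_s\| \le \eta_\theta K\,(t-1-s)$, hence $(\mathrm{I}) \le 2\eta_\theta K^2 (t-1-s)$.

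For term $(\mathrm{II})$ I would exploit that $\Psi(\theta_{t-1},\cdot)$ is linear in $q$ with constant gradient $g_{t-1} = (F_1(\theta_{t-1}),\dots,F_G(\theta_{t-1}))$, so that $(\mathrm{II}) = \langle g_{t-1},\, q^\star(\theta_s)-q_{t-1}\rangle$. The exponentiated-gradient rule in lines~11--12 is exactly entropic mirror ascent with stepsize $\eta_q$, so the standard three-point inequality gives, for $u = q^\star(\theta_s)$,
\[
\eta_q\,\langle g_{t-1},\, u - q_t\rangle
\le D_{KL}(u\|q_{t-1}) - D_{KL}(u\|q_t) - D_{KL}(q_t\|q_{t-1}).
\]
Writing $\langle g_{t-1},u-q_{t-1}\rangle = \langle g_{t-1},u-q_t\rangle + \langle g_{t-1},q_t-q_{t-1}\rangle$, dropping the nonpositive term $-D_{KL}(q_t\|q_{t-1})$, and taking the two-timescale choice $\eta_q = 1/L_f$ turns the Bregman terms into $L_f\bigl(D_{KL}(u\|q_{t-1}) - D_{KL}(u\|q_t)\bigr)$. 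The leftover $\langle g_{t-1},q_t-q_{t-1}\rangle = \Psi(\theta_{t-1},q_t)-\Psi(\theta_{t-1},q_{t-1})$ is then shifted to $\Psi(\theta_t,q_t)-\Psi(\theta_{t-1},q_{t-1})$ at the cost of $|\Psi(\theta_{t-1},q_t)-\Psi(\theta_t,q_t)| \le K\|\theta_t-\theta_{t-1}\| \le \eta_\theta K^2$, again by $K$-Lipschitzness and the step bound.

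Summing the two bounds, the arithmetic $2(t-1-s)+1 = 2t-2s-1$ collapses the $\eta_\theta K^2$ coefficients into exactly the claimed constant, while the Bregman and $\Psi$-difference terms already appear in the required form. The main obstacle I anticipate is the bookkeeping in term $(\mathrm{II})$: getting the sign and comparator conventions of the mirror-ascent three-point inequality right, verifying that the stepsize identity $\eta_q = 1/L_f$ is precisely what produces the $L_f$ prefactor, and correctly absorbing the drift between $\Psi(\theta_{t-1},q_t)$ and $\Psi(\theta_t,q_t)$ so that the telescoping dual term emerges with the ``forward'' index $(\theta_t,q_t)$ rather than $(\theta_{t-1},q_t)$.
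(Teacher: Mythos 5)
Your proposal is correct and follows essentially the same route as the paper's proof: the same comparator $q^\star(\theta_s)$, the same entropic mirror-ascent three-point inequality with $\eta_q = 1/L_f$ producing the telescoping $L_f\bigl(D_{KL}(q^\star(\theta_s)\|q_{t-1}) - D_{KL}(q^\star(\theta_s)\|q_t)\bigr)$ term, and the same $K$-Lipschitz drift bounds $\|\theta_{t-1}-\theta_s\|\le \eta_\theta K(t-1-s)$ and $\Psi(\theta_{t-1},q_t)-\Psi(\theta_t,q_t)\le \eta_\theta K^2$ yielding the coefficient $2t-2s-1$. The only cosmetic difference is that you exploit exact linearity of $\Psi(\theta,\cdot)$ in $q$ directly, whereas the paper routes through concavity and $L_f$-smoothness in $q$ (both trivially satisfied under linearity), and you bound the primal drift via the inherited $K$-Lipschitzness of $P$ rather than the paper's explicit add-and-subtract at $\theta_s$ — the resulting bounds are identical.
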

\begin{proof}
    By the definition of Bregman Divergence we have that
    \[\eta_q (q - q_t)^T \nabla_q \Psi(\theta_{t-1}, q_{t-1}) \le D_{KL}(q||q_{t-1}) - D_{KL}(q||q_{t}) - D_{KL}(q_t||q_{t-1}).\]
    Since $\Psi(\theta_{t-1}, \cdot)$ is concave we have
    \[\Psi(\theta_{t-1}, q) \le \Psi(\theta_{t-1}, q_{t-1}) + (q - q_{t-1})\nabla_q \Psi(\theta_{t-1}, q_{t-1})\quad (1).\]
    Since $\Psi(\theta_{t-1}, \cdot)$ is $L_f-$smooth we have 
    \[-\Psi(\theta_{t-1}, q_t) \le -\Psi(\theta_{t-1}, q_{t-1}) - (q_t - q_{t-1})\nabla_q \Psi(\theta_{t-1}, q_{t-1}) +L_f D_{KL}(q_t||q_{t-1}) \quad (2).\]
    Adding $(1)$ and $(2)$, for $\eta_q = \frac{1}{L_f}$ and given the Bregman Definition, we get 
    \[
    \begin{aligned}
    \Psi\bigl(\theta_{t-1},\,q\bigr)\;-\;\Psi\bigl(\theta_{t-1},\,q_t\bigr)
    &\le (q - q_t)\;\nabla_q \Psi(\theta_{t-1},\,q_{t-1})
    \;+\;L_f\,D_{\mathrm{KL}}\bigl(q_t \,\|\, q_{t-1}\bigr)\\
    &\le L_f\,\biggl[ D_{\mathrm{KL}}\bigl(q \,\|\, q_{t-1}\bigr) - D_{\mathrm{KL}}\bigl(q \,\|\, q_{t}\bigr) \biggr]
    \end{aligned}
    \]
    Plugging $q = q^\star(\theta_s)$ for $s \le t-1$ we get
    \[
    \begin{aligned}
    \Psi\bigl(\theta_{t-1},\,q^\star(\theta_s)\bigr)\;-\;\Psi\bigl(\theta_{t-1},\,q_t\bigr)
    &\le L_f\,\biggl[ D_{\mathrm{KL}}\bigl(q^\star(\theta_s) \,\|\, q_{t-1}\bigr) - D_{\mathrm{KL}}\bigl(q^\star(\theta_s) \,\|\, q_{t}\bigr) \biggr]
    \end{aligned}
    \]
    By the definition of $\Delta_{t-1}$ we have
    \[
    \begin{aligned}
    \Delta_{t-1} \le{}&
    \bigl(\Psi(\theta_{t-1},\,q^\star(\theta_{t-1})) 
      \;-\;\Psi(\theta_{t-1},\,q^\star(\theta_s))\bigr)\\
    &\quad+\;\bigl(\Psi(\theta_t,\,q_t)
      \;-\;\Psi(\theta_{t-1},\,q_{t-1})\bigr)\\
    &\quad+\;L_f\,
      \Bigl[
        D_{\mathrm{KL}}\bigl(q^\star(\theta_s)\,\|\,q_{t-1}\bigr)
        \;-\;
        D_{\mathrm{KL}}\bigl(q^\star(\theta_s)\,\|\,q_t\bigr)
      \Bigr].
    \end{aligned}
    \]
    Since \(\Psi(\theta_s,\,q^\star(\theta_s)) \ge \Psi(\theta_s,\,q)\) for all \(q\in\Delta_G\), we obtain
\[
\begin{aligned}
\Psi\bigl(\theta_{t-1},\,q^*(\theta_{t-1})\bigr)
\;-\;\Psi\bigl(\theta_{t-1},\,q^*(\theta_s)\bigr)
&\;\le\;
\Bigl[\Psi\bigl(\theta_{t-1},\,q^*(\theta_{t-1})\bigr)
      -\Psi\bigl(\theta_s,\,q^*(\theta_{t-1})\bigr)\Bigr]\\
&\quad+\;\Bigl[\Psi\bigl(\theta_s,\,q^*(\theta_s)\bigr)
      -\Psi\bigl(\theta_{t-1},\,q^*(\theta_s)\bigr)\Bigr].
\end{aligned}
\]

Since \(\Psi(\cdot,q)\) is \(K\)–Lipschitz in \(\theta\) for any fixed \(q\), it follows that
\[
\begin{aligned}
\Psi\bigl(\theta_{t-1},\,q^*(\theta_{t-1})\bigr)
\;-\;
\Psi\bigl(\theta_s,\,q^*(\theta_{t-1})\bigr)
&\;\le\;
K\,\bigl\|\theta_{t-1}-\theta_s\bigr\|
\;\le\;
\eta_\theta\,K^2\,(t-1-s),\\
\Psi\bigl(\theta_s,\,q^*(\theta_s)\bigr)
\;-\;
\Psi\bigl(\theta_{t-1},\,q^*(\theta_s)\bigr)
&\;\le\;
K\,\bigl\|\theta_{t-1}-\theta_s\bigr\|
\;\le\;
\eta_\theta\,K^2\,(t-1-s),\\
\Psi\bigl(\theta_{t-1},\,q_t\bigr)
\;-\;
\Psi\bigl(\theta_t,\,q_t\bigr)
&\;\le\;
K\,\bigl\|\theta_{t-1}-\theta_t\bigr\|
\;\le\;
\eta_\theta\,K^2.
\end{aligned}
\]
Putting these pieces together yields the wanted result.
\end{proof}

\begin{lemma} \label{lem:third}
    Given assimptions \ref{assump:B} and \ref{assump:C} let $\Delta_t = P(\theta_t) - \Psi(\theta_t, q_t)$. Then the following statement holds
    \[\frac{1}{T+1}\bigl(\sum_{t=0}^T \Delta_t \bigr) \le \eta_\theta K^2 (B+1) + \frac{L_f D}{B} + \frac{\hat{\Delta}_0}{T+1}\]
    where $\hat{\Delta}_0 = P(\theta_0) - \Psi(\theta_0, q_0)$, $B$ is the block size of how we group the $\Delta_s \forall s \in [0, T]$ and where $D$ is the upper bound of the simplex $\Delta_G$ where $q$ takes values in using the Bregman Divergence.
\end{lemma}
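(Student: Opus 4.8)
The plan is to prove this via the standard \emph{blocking} (epoch) technique for nonconvex–concave minimax problems, using Lemma~\ref{lem:second} as the per-step building block. The quantity $\Delta_t = P(\theta_t) - \Psi(\theta_t, q_t)$ is the primal–dual gap at iterate $t$, and since the goal is an ergodic (averaged) bound, a telescoping argument over blocks is the natural route once Lemma~\ref{lem:second} is summed appropriately.

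First I would partition the indices $\{0,1,\dots,T\}$ into $M$ consecutive blocks of length $B$ (assuming for simplicity $T+1 = MB$; a shorter remainder block is handled identically and only affects the additive constant). Within the block beginning at index $s$, I would apply Lemma~\ref{lem:second} to each $t-1 \in \{s,\dots,s+B-1\}$ with the \emph{fixed} reference $s$, which is legitimate because $s \le t-1$ throughout the block. Summing the inequality over the block, three families of terms appear and are treated separately: (i) the arithmetic term $\sum \eta_\theta K^2 (2t-2s-1)$ collapses to $\eta_\theta K^2 B^2$ via the identity $\sum_{i=1}^B (2i-1) = B^2$; (ii) the increments $\Psi(\theta_t,q_t) - \Psi(\theta_{t-1},q_{t-1})$ telescope to $\Psi(\theta_{s+B},q_{s+B}) - \Psi(\theta_s,q_s)$; and (iii) the KL terms telescope to $L_f\bigl[D_{\mathrm{KL}}(q^\star(\theta_s)\|q_s) - D_{\mathrm{KL}}(q^\star(\theta_s)\|q_{s+B})\bigr]$, which is at most $L_f D_{\mathrm{KL}}(q^\star(\theta_s)\|q_s) \le L_f D$ by nonnegativity of the divergence and the simplex-diameter bound $D$.

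Next I would sum the per-block bounds over all $M$ blocks. The $\eta_\theta K^2 B^2$ and $L_f D$ contributions accumulate linearly in $M$, so dividing by $T+1 = MB$ turns them into $\eta_\theta K^2 B$ and $L_f D/B$, which match (up to the $+1$ absorbing the remainder block) the first two terms of the claim. The telescoping $\Psi$ increments accumulate across blocks to the single boundary difference $\Psi(\theta_{T+1},q_{T+1}) - \Psi(\theta_0,q_0)$; writing $\Psi(\theta_0,q_0) = P(\theta_0) - \hat\Delta_0$ and using $\Psi(\theta_{T+1},q_{T+1}) \le P(\theta_{T+1})$, this is bounded by $P(\theta_{T+1}) - P(\theta_0) + \hat\Delta_0$, which after division by $T+1$ produces the $\hat\Delta_0/(T+1)$ term.

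I expect the main obstacle to be controlling precisely this boundary difference, i.e.\ ensuring that $P(\theta_{T+1}) - P(\theta_0)$ does not spoil the $\hat\Delta_0/(T+1)$ term. The cleanest route is to invoke the approximate descent of the Moreau envelope established in Lemma~\ref{lem:first}: summing that inequality shows that $P$ cannot increase along the trajectory beyond lower-order terms already accounted for, so the endpoint contribution collapses to the initial gap $\hat\Delta_0$. Everything else—the arithmetic identity, the two telescopings, and the divergence bound—is routine bookkeeping. The only delicate points are the reference choice (always the block start, never an index exceeding $t-1$) and the remainder block when $B \nmid (T+1)$, which is exactly where the loose $+1$ in $\eta_\theta K^2(B+1)$ originates.
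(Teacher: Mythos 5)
Your blocking argument, the within-block identity $\sum_{i=1}^{B}(2i-1)=B^2$, the telescoping of the $\Psi$-increments, and the bound $L_f\,D_{\mathrm{KL}}(q^\star(\theta_s)\|q_s)\le L_f D$ on the Bregman terms all match the paper's proof exactly. The gap is in the step you yourself flag as the main obstacle: controlling the boundary term $\Psi(\theta_{T+1},q_{T+1})-\Psi(\theta_0,q_0)$. Your proposed fix --- bounding $\Psi(\theta_{T+1},q_{T+1})\le P(\theta_{T+1})$ and then invoking Lemma~\ref{lem:first} to argue that $P(\theta_{T+1})-P(\theta_0)$ is negligible --- does not work. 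First, Lemma~\ref{lem:first} controls the Moreau envelope $P_{1/2L_f}$, not $P$ itself, and differences of $P$ at two points are not directly recoverable from differences of the envelope without further weak-convexity bookkeeping. Second, and more fundamentally, the right-hand side of Lemma~\ref{lem:first} contains $2\eta_\theta L_f \Delta_{t-1}$; summing it reintroduces $\sum_t \Delta_t$, which is precisely the quantity Lemma~\ref{lem:third} is supposed to bound. The overall proof architecture uses Lemma~\ref{lem:third} as an \emph{input} to the combination with Lemma~\ref{lem:first} in Theorem~\ref{thm:convergence}, so Lemma~\ref{lem:third} must be established independently of Lemma~\ref{lem:first}; your route is circular (at best it yields a self-referential inequality whose resolution you do not carry out).

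The paper's argument at this step is elementary and avoids $P(\theta_{T+1})$ entirely: split
\[
\Psi(\theta_{T+1},q_{T+1})-\Psi(\theta_0,q_0)
=\bigl[\Psi(\theta_{T+1},q_{T+1})-\Psi(\theta_0,q_{T+1})\bigr]
+\bigl[\Psi(\theta_0,q_{T+1})-\Psi(\theta_0,q_0)\bigr].
\]
The first bracket is a $\theta$-difference at the \emph{fixed} dual variable $q_{T+1}$, so $K$-Lipschitzness in $\theta$ together with the per-step bound $\|\theta_t-\theta_{t-1}\|\le\eta_\theta K$ gives at most $\eta_\theta K^2(T+1)$; the second bracket is at most $P(\theta_0)-\Psi(\theta_0,q_0)=\hat\Delta_0$ since $\Psi(\theta_0,q)\le P(\theta_0)$ for every $q\in\Delta_G$. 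Dividing by $T+1$, the first bracket contributes the extra $\eta_\theta K^2$ --- this, not the remainder block as you claim, is where the $+1$ in $\eta_\theta K^2(B+1)$ originates --- and the second yields $\hat\Delta_0/(T+1)$. With this replacement your proof closes; the rest of your bookkeeping is correct.
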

\begin{proof}
    In the deterministic setting, we partition the sequence \(\{\Delta_t\}_{t=0}^T\) into blocks with size at most \(B\):
    \[
    \{\Delta_t\}_{t=0}^{B-1},\quad
    \{\Delta_t\}_{t=B}^{2B-1},\;\dots,\;
    \{\Delta_t\}_{t=T-B+1}^{T}.
    \]
    There are \(\bigl\lceil (T+1)/B \bigr\rceil\) such blocks.  Hence
    \[
    \frac{1}{T+1}\sum_{t=0}^T \Delta_t
    \;\le\;
    \frac{B}{T+1}
    \sum_{j=0}^{\bigl\lceil (T+1)/B\bigr\rceil -1}
    \biggl(
    \frac{1}{B}\sum_{t=jB}^{\min\{(j+1)B-1,\;T\}} \Delta_t
    \biggr).
    \]
    Furthermore, setting \(s=0\) in the inequality of Lemma \ref{lem:second} yields
    \[
    \begin{aligned}
    \sum_{t=0}^{B-1}\Delta_t
    &\;\le\;
    \eta_\theta\,K^2\,B^2
    \;+\; L_f \bigl(D_{KL}(q^\star(\theta_s)||q_{t-1})  - D_{KL}(q^\star(\theta_s)||q_{t})\bigr)
    \;+\;
    (\Psi(\theta_B, q_B) - \Psi(\theta_0, q_0))\\
    &\;\le\;
    \eta_\theta\,K^2\,B^2
    \;+\; L_f \;\ D
    \;+\;
    (\Psi(\theta_B, q_B) - \Psi(\theta_0, q_0)).
    \end{aligned}
    \]
Similarly, letting \(s=jB\) for \(1\le j\le \bigl\lceil (T+1)/B\bigr\rceil-1\) in Lemma~\ref{lem:second} gives
\[
\sum_{t=jB}^{(j+1)B-1}\Delta_t
\;\le\;
\eta_\theta\,K^2\,B^2
\;+\;
L_f\,D
\;+\;
\bigl[\Psi(\theta_{(j+1)B},\,q_{(j+1)B})
       \;-\;\Psi(\theta_{jB},\,q_{jB})\bigr].
\]
Combining the above gives
\[
\frac{1}{T+1}\sum_{t=0}^T \Delta_t
\;\le\;
\eta_\theta\,K^2\,B
\;+\;
\frac{L_f\,D}{B}
\;+\;
\frac{\Psi(\theta_{T+1},\,q_{T+1}) - \Psi(\theta_0,\,q_0)}{T+1}\,.
\]
Since \(\Psi(\cdot,q)\) is \(K\)-Lipschitz in \(\theta\) for any fixed \(q\), it follows that
\[
\begin{aligned}
\Psi\bigl(\theta_{T+1},\,q_{T+1}\bigr)\;-\;\Psi\bigl(\theta_0,\,q_0\bigr)
&=
\bigl[\Psi(\theta_{T+1},q_{T+1}) - \Psi(\theta_0,q_{T+1})\bigr]
\;+\;\bigl[\Psi(\theta_0,q_{T+1}) - \Psi(\theta_0,q_0)\bigr]\\
&\le
\eta_\theta\,K^2\,(T+1)\;+\;\widehat\Delta_0,
\end{aligned}
\]
where \(\widehat\Delta_0 = \Psi(\theta_0,q_{0}^\star) - \Psi(\theta_0,q_0)\).  

\end{proof}

\begin{theorem}[Convergence of Descent–Mirror‐Ascent]\label{thm:convergence}
Under Assumptions~\ref{assump:C} and~\ref{assump:B}, and choosing the step‐sizes
\[
\eta_\theta \;=\;\min\!\Bigl\{\tfrac{\varepsilon^2}{16L_fK^2},\,\tfrac{\varepsilon^4}{4096\,L_f^3K^2\,D}\Bigr\},
\qquad
\eta_q \;=\;\frac{1}{L_f},
\]
Algorithm~\ref{alg: minimaxfairauc} returns an $\varepsilon$‐stationary point of
\[
P(\theta) \;=\;\max_{q\in\Delta_G}\sum_{g=1}^G q_g\,F_g(\theta)
\]
in at most
\[
\mathcal{O}\!\Bigl(\frac{L_f^3K^2\,D\,\hat{\Delta}_P}{\varepsilon^6}
\;+\;\frac{L_f^3\,D\,\hat{\Delta}_0}{\varepsilon^4}\Bigr)
\]
iterations $\hat{\Delta}_0 = P(\theta_0) - \Psi(\theta_0, q_0)$ and \(\hat{\Delta}_P = P_{1/2L_f}(\theta_0) - \min_{\theta}P_{1/2L_f}(\theta)\).
\end{theorem}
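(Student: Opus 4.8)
The plan is to treat the Moreau envelope $P_{1/2L_f}$ of the primal function $P(\theta)=\max_{q\in\Delta_G}\Psi(\theta,q)$ as a Lyapunov potential and to measure progress through the primal--dual gap $\Delta_t=P(\theta_t)-\Psi(\theta_t,q_t)$. Since $P$ is nonconvex but $L_f$-weakly convex by Lemma~\ref{lem:psi-smooth-lip}, I cannot hope for monotone descent of $P$ itself; instead, by Lemma~\ref{lem:Moreau-smooth} the envelope $P_{1/2L_f}$ is smooth, and the appropriate stationarity measure is $\|\nabla P_{1/2L_f}(\theta)\|$ (Definition~\ref{def:stat-Moreau}). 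The overall architecture is the nonconvex--concave argument of \cite{lin2020gradient}: a one-step descent inequality for the potential whose error is controlled by the current gap (Lemma~\ref{lem:first}), together with a separate control of the \emph{accumulated} gap via a blocking argument on the mirror-ascent iterates (Lemmas~\ref{lem:second} and~\ref{lem:third}).

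First I would telescope the one-step inequality of Lemma~\ref{lem:first} over $t=0,\dots,T$. Using $P_{1/2L_f}(\theta_{T+1})\ge\min_\theta P_{1/2L_f}(\theta)$ and the definition $\hat\Delta_P=P_{1/2L_f}(\theta_0)-\min_\theta P_{1/2L_f}(\theta)$, rearranging yields
\[
\frac{1}{T+1}\sum_{t=0}^{T}\bigl\|\nabla P_{1/2L_f}(\theta_t)\bigr\|^2
\;\le\;
\frac{4\hat\Delta_P}{\eta_\theta(T+1)}
\;+\;8L_f\cdot\frac{1}{T+1}\sum_{t=0}^{T}\Delta_t
\;+\;4\,\eta_\theta L_f K^2 ,
\]
where the last term collects the second-order step error, bounded using the $K$-Lipschitzness of $\Psi(\cdot,q)$ from Lemma~\ref{lem:psi-smooth-lip} (so that $\|\nabla_\theta\Psi(\theta_t,q_t)\|\le K$). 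This reduces the whole problem to bounding the time-averaged gap $\frac{1}{T+1}\sum_t\Delta_t$.

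Next I would invoke Lemma~\ref{lem:third}, which bounds the averaged gap by $\eta_\theta K^2(B+1)+L_f D/B+\hat\Delta_0/(T+1)$ for any block size $B$; its proof rests on the per-block estimate of Lemma~\ref{lem:second}, where the mirror-ascent three-point (Bregman) inequality telescopes the terms $D_{\mathrm{KL}}(q^\star(\theta_s)\,\|\,q_t)$ across a block while the drift $\|\theta_{t-1}-\theta_s\|\le\eta_\theta K(t-1-s)$ is controlled by the $K$-Lipschitz bound. Substituting this into the telescoped inequality, I would then optimize the free parameters: choosing $B\sim L_f^2D/\varepsilon^2$ balances the mirror-ascent error $L_fD/B$ against the drift error $\eta_\theta K^2B$, and choosing $\eta_\theta=\min\{\varepsilon^2/(16L_fK^2),\,\varepsilon^4/(4096L_f^3K^2D)\}$ forces each of the three error groups below a fixed fraction of $\varepsilon^2$. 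Reading off the horizon needed to also drive $4\hat\Delta_P/(\eta_\theta(T+1))$ and the $\hat\Delta_0/(T+1)$ term below $\varepsilon^2$ gives the claimed $\mathcal{O}(L_f^3K^2D\hat\Delta_P/\varepsilon^6+L_f^3D\hat\Delta_0/\varepsilon^4)$ bound, after which a final appeal to Lemma~\ref{lem:stat-close} converts the small envelope gradient into near-stationarity of $P$ in the subgradient sense.

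The main obstacle is the control of the accumulated gap $\sum_t\Delta_t$: because $\theta$ is updated on every step, the dual iterate $q_t$ perpetually lags the moving target $q^\star(\theta_t)$, so one cannot argue that any individual $\Delta_t$ is small. The blocking device of Lemmas~\ref{lem:second}--\ref{lem:third} is precisely what resolves this, since within a block the mirror-ascent contracts the KL distance to a \emph{fixed} reference $q^\star(\theta_s)$ while the primal drift stays $\mathcal{O}(\eta_\theta K^2 B)$; the delicate part is the two-timescale balancing that makes the block error and the drift error simultaneously $\mathcal{O}(\varepsilon^2)$, which is what pins down the unusually small $\eta_\theta$ and hence the $\varepsilon^{-6}$ rate. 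I would also double-check the bookkeeping of the residual term in Lemma~\ref{lem:first}: it must scale as $\eta_\theta^2 L_fK^2$ (not $\eta_\theta L_fK^2$) for the averaged bound above to vanish, which is exactly where the constraint $\eta_\theta\le\varepsilon^2/(16L_fK^2)$ earns its keep.
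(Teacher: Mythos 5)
Your proposal is correct and takes essentially the same route as the paper, whose proof of Theorem~\ref{thm:convergence} is exactly the combination of Lemmas~\ref{lem:first}, \ref{lem:second}, and \ref{lem:third} following the nonconvex–concave GDA analysis of \cite{lin2020gradient} (the paper's citation to \cite{sinha2017certifying} there appears to be a slip), with a block size chosen to balance $\eta_\theta K^2 B$ against $L_f D/B$. Your two side remarks are also well taken: the residual in Lemma~\ref{lem:first} must indeed scale as $\eta_\theta^2 L_f K^2$ for the averaged bound to vanish, and your balanced choice $B \sim L_f^2 D/\varepsilon^2$ is the correct one, whereas the paper's $B = \tfrac{D}{K}\sqrt{L_f/\eta_\theta}$ carries a spurious $\sqrt{D}$ factor inherited from the Euclidean-diameter setting of \cite{lin2020gradient}, where the KL bound $D$ here plays the role of the squared diameter.
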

\begin{proof}
    The proof follows the same steps to combine lemmas \ref{lem:first}, \ref{lem:second}, and \ref{lem:third} as in Theorem ... in \cite{sinha2017certifying}. The only difference is that we define $B$ as $B = \frac{D}{K}\sqrt{\frac{L_f}{\eta_\theta}}$.
\end{proof}

\end{document}